\documentclass[11pt,twoside]{article}

\usepackage{fullpage}

\usepackage{epsf}
\usepackage{fancyheadings}
\usepackage{graphics}
\usepackage{graphicx}
\usepackage{psfrag}

\usepackage{color}

\usepackage{amsthm}
\usepackage{amsfonts}
\usepackage{amsmath}
\usepackage{amssymb}
\usepackage[linesnumbered, ruled, vlined]{algorithm2e}

\setlength{\textwidth}{\paperwidth}
\addtolength{\textwidth}{-6cm}
\setlength{\textheight}{\paperheight}
\addtolength{\textheight}{-4cm}
\addtolength{\textheight}{-1.1\headheight}
\addtolength{\textheight}{-\headsep}
\addtolength{\textheight}{-\footskip}
\setlength{\oddsidemargin}{0.5cm}
\setlength{\evensidemargin}{0.5cm}


\theoremstyle{plain}


\newtheorem{theorem}{Theorem}

\newtheorem{lemma}{Lemma}
\newtheorem{corollary}{Corollary}
\newtheorem{definition}{Definition}

\newlength{\widebarargwidth}
\newlength{\widebarargheight}
\newlength{\widebarargdepth}

\makeatletter
\long\def\@makecaption#1#2{
        \vskip 0.8ex
        \setbox\@tempboxa\hbox{\small {\bf #1:} #2}
        \parindent 1.5em  
        \dimen0=\hsize
        \advance\dimen0 by -3em
        \ifdim \wd\@tempboxa >\dimen0
                \hbox to \hsize{
                        \parindent 0em
                        \hfil 
                        \parbox{\dimen0}{\def\baselinestretch{0.96}\small
                                {\bf #1.} #2
                                } 
                        \hfil}
        \else \hbox to \hsize{\hfil \box\@tempboxa \hfil}
        \fi
        }
\makeatother


\long\def\comment#1{}


\newcommand{\matsnorm}[2]{|\!|\!| #1 | \! | \!|_{{#2}}}

\newcommand{\dH}{\mathsf{d}_{\mathsf{H}} }

\newcommand{\nucnorm}[1]{\ensuremath{\matsnorm{#1}{\tiny{\mbox{nuc}}}}}
\newcommand{\fronorm}[1]{\frobnorm{#1}}
\newcommand{\frobnorm}[1]{\ensuremath{\matsnorm{#1}{\mbox{\tiny{F}}}}}
\newcommand{\opnorm}[1]{\ensuremath{\matsnorm{#1}{\tiny{\mbox{op}}}}}



\newcommand{\Exs}{\ensuremath{{\mathbb{E}}}}

\newcommand{\numobs}{\ensuremath{n}}
\newcommand{\usedim}{\ensuremath{d}}

\newcommand{\1}{\ensuremath{{\sf (i)}}}
\newcommand{\2}{\ensuremath{{\sf (ii)}}}
\newcommand{\3}{\ensuremath{{\sf (iii)}}}
\newcommand{\4}{\ensuremath{{\sf (iv)}}}


\DeclareMathOperator{\diag}{diag}

\DeclareMathOperator{\rank}{{\sf rank}}

\DeclareMathOperator{\range}{{\sf range}}

\newcommand{\NORMAL}{\ensuremath{\mathcal{N}}}

\newcommand{\thetastar}{\ensuremath{\theta^*}}
\newcommand{\thetahat}{\ensuremath{\widehat{\theta}}}

\newcommand{\xhat}{\ensuremath{\widehat{x}}}
\newcommand{\Deltahat}{\ensuremath{\widehat{\Delta}}}
\newcommand{\udiff}{\ensuremath{\mathbb{U}^{{\sf diff}}_m(A)}}
\newcommand{\Xhat}{\ensuremath{\widehat{X}}}
\newcommand{\Yhat}{\ensuremath{\widehat{Y}}}
\newcommand{\Yhatsrl}{\ensuremath{\widehat{Y}_{{\sf sr}}(\lambda_0)}}

\newcommand{\Xhatml}{\ensuremath{\widehat{X}_{{\sf ML}}}}

\newcommand{\deltanm}{\ensuremath{\delta_{n,m}}}

\newcommand{\widgraph}[2]{\includegraphics[keepaspectratio,width=#1]{#2}}


\newcommand{\real}{\ensuremath{\mathbb{R}}}

\newcommand{\Pihat}{\ensuremath{\widehat{\Pi}}}
\newcommand{\Pihatml}{\ensuremath{\widehat{\Pi}_{{\sf ML}}}}

\newcommand{\EE}{\ensuremath{\mathbb{E}}}

\theoremstyle{definition}
\newtheorem{example}{Example}

\newcommand{\MSE}{\ensuremath{\mathcal{R}}}
\newcommand{\Pistar}{\ensuremath{\Pi^*}}
\newcommand{\PERMN}{\ensuremath{\mathcal{P}_\numobs}}
\newcommand{\Xstar}{\ensuremath{X^*}}
\newcommand{\defn}{\ensuremath{: \, = }}
\newcommand{\mdim}{\ensuremath{m}}

\newcommand{\PiLev}{\ensuremath{\Pihat_{{\sf lev}}}}
\newcommand{\XLev}{\ensuremath{\Xhat_{{\sf lev}}}}

\newcommand{\tracer}[2]{\ensuremath{\langle \!\langle {#1}, \; {#2}
\rangle \!\rangle}}

\begin{document}


\begin{center}

{\bf{\LARGE{Denoising Linear Models with Permuted Data}}}

\vspace*{.2in}

{\large{
\begin{tabular}{ccc}
Ashwin Pananjady$^\dagger$ &  Martin J. Wainwright$^{\dagger,\star}$ & Thomas A. Courtade$^\dagger$ \\
\end{tabular}
}}

\vspace*{.2in}

\begin{tabular}{c}
Department of Electrical Engineering and Computer Sciences$^\dagger$ \\
Department of Statistics$^\star$ \\
UC Berkeley
\end{tabular}

\vspace*{.2in}

\today

\vspace*{.2in}

\begin{abstract}
The multivariate linear regression model with shuffled data and
additive Gaussian noise arises in various correspondence estimation
and matching problems. Focusing on the denoising aspect of this
problem, we provide a characterization the minimax error rate that is
sharp up to logarithmic factors. We also analyze the performance of
two versions of a computationally efficient estimator, and establish
their consistency for a large range of input parameters. Finally, we
provide an exact algorithm for the noiseless problem and demonstrate
its performance on an image point-cloud matching task. Our analysis
also extends to datasets with outliers.
\end{abstract}
\end{center}


\section{Introduction}
\label{sec:introduction}

The linear model is a ubiquitous and well-studied tool for predicting
responses $y$ based on a vector $a$ of covariates or predictors. In
this paper, we consider the multivariate version of the model, with
vector-valued responses $y_i \in \real^m$, and covariates $a_i \in
\real^d$.  In the standard formulation of this problem, estimation is
performed on the basis of a data set of $\numobs$ pairs $\{a_i, y_{i}
\}_{i=1}^\numobs$, in which each response $y_i$ is correctly
associated with the covariate vector $a_i$ that generated it.  Our
focus is instead on the following variant of the standard set-up:
the input consists of the permuted data set $\{a_i,
y_{\pi_i}\}_{i=1}^n$, where $\pi$ represents an unknown permutation.
The presence of this unknown permutation---which can be viewed as a
nuisance parameter---introduces substantial challenges to this
problem.

It is convenient to introduce matrix-vector notation so as to state
the problem more precisely.  If we form the matrices $A \in
\real^{\numobs \times \usedim}$ and $Y \in \real^{\numobs \times
  \mdim}$ with $a_i^T$ and $y_i^T$, respectively, as their $i^{th}$
row, we arrive at the model
\begin{align}
 \label{obsmodel}
Y & = \Pi^* A X^* + W,
\end{align}
where $\Pi^*$ is an unknown $n \times n$ permutation matrix, $X^* \in
\real^{d \times m}$ is an unknown matrix of parameters, and $W$ is the
additive observation noise\footnote{We refer to the setting $W= 0$
  \emph{a.s.} as the noiseless case.}. When $m = 1$, this reduces to
the vector linear regression model with an unknown permutation, given
by
\begin{align}
y = \Pi^* A x^* + w, \label{shuffvec}
\end{align}
which we refer to as the shuffled vector model.

The observation model~\eqref{obsmodel} arises in multiple
applications, which are discussed in detail for the shuffled vector
model~\eqref{shuffvec} in our earlier work \cite{permpaper}. Here let
us describe two applications that arise in the multivariate setting
($m > 1$), which we use as running examples throughout the paper.

\begin{figure}[h]
    \begin{center}
    \vspace{-2mm} \includegraphics[scale=.45]{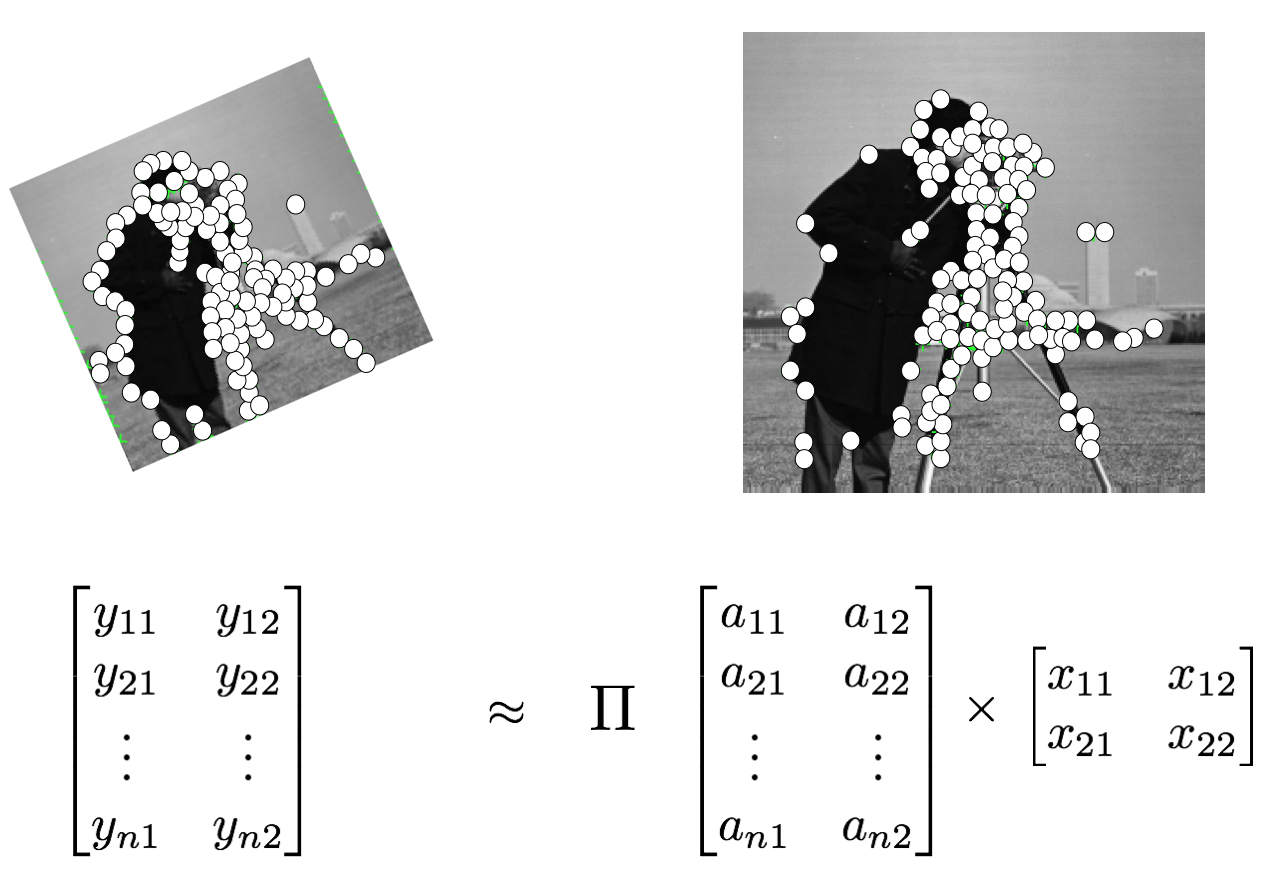}
    \caption{Example of pose and correspondence estimation for 2D
      images. The image coordinates are related by an unknown resizing
      and rotation $X$. The unknown permutation represents the
      correspondence between keypoints (white circles) obtained via
      corner-detection. The matrices $Y$ and $A$ represent coordinates
      of all keypoints, and approximately obey the relation
      \eqref{obsmodel} because all the keypoints detected in the two
      images are not the same. \vspace{-.4cm}} \label{fig:pose}
    \end{center}
\end{figure}

\begin{example}[Pose and correspondence estimation]
Our first motivating application is the problem of pose and
correspondence estimation in images~\cite{marques2009subspace}; it is
closely related to point-cloud matching in
graphics~\cite{mann1993compositing}. Suppose that we are given two
images of a similar object, with the coordinates of one image arising
from an unknown linear transformation of the coordinates of the
second. In order to determine the linear transformation, keypoints are
detected in each of the images individually and then matched; see
Figure~\ref{fig:pose} for an illustration. We emphasize that in
practice, the keypoint detection algorithm also returns features that
help in finding the matching permutation $\Pi^*$, but our goal here is
to analyze whether there are procedures that are robust to such
features being missing or corrupted.  It is also worth noting that
while in this example we have $d = m = 2$, the model is also valid for
higher (but equal) parameters $d$ and $m$, if we assume that in
addition to the coordinates of the keypoints, other attributes like
pixel brightness, colour, etc. in the two images are also related by a
linear transformation.
\end{example}

\begin{example}[Header-free communication]
A second application is that of header-free communication in large
communication networks~\cite{permpaper}. Suppose that we use multiple
sensors to take noisy measurements of a unknown matrix $X^*$ of
parameters; each measurement corresponds to a noisy linear observation
of the form $a_i^\top X^* + w_i^\top$. In very large networks, such as
those that arise in Internet of Things applications, it is often found
that the bandwidth between a sensor and fusion center is mainly
dominated by a header containing identity information---that is, by a
bitstring that identifies sensor $i$ to the fusion
center~\cite{keller2009identity}. One possible solution to this
problem is header-free communication, meaning that the identities of
the sensors that sent the signal are no longer known to the fusion
center.  This absence can be modeled by introducing the unknown
permutation matrix as in our model. If we are still able to achieve
similar statistical performance without these headers, then such an
approach is clearly preferable from a bandwidth standpoint.
\end{example}

\vspace*{.05in}

With this motivation in hand, let us now provide a high-level overview
of the main results of this paper.  We focus on the multivariate
model~\eqref{obsmodel} with a fixed design matrix $A$, and
Gaussian\footnote{Our results also extend to the case of
  i.i.d. sub-Gaussian noise.} noise $W_{ij} \stackrel{i.i.d.}{\sim}
\NORMAL(0, \sigma^2)$. We evaluate an estimator $(\Pihat, \Xhat)$
based on its ``denoising" capability, which we capture using the
normalized prediction error $\frac{1}{nm} \frobnorm{\Pihat A \Xhat -
  \Pi^* A X^* }^2$. Our primary objective in this paper is to
characterize the fundamental limits of denoising in a minimax sense.
In particular, an estimator is any measurable mapping of the input
$(y, A)$ to estimates $(\Pihat, \Xhat)$ of the permutation and
regression matrix, and we measure the quality of these estimates via
their uniform mean-squared error
\begin{subequations}
\begin{align}
\MSE(\Pihat, \Xhat) & \defn \frac{1}{\numobs \mdim} \sup_{
  \substack{\Pistar \in \PERMN \\ \Xstar \in \real^{\usedim \times
      \mdim}}} \Exs \frobnorm{\Pihat A \Xhat - \Pi^* A X^* }^2,
\end{align}
where the expectation is taken over the noise $W$, and any randomness
in the estimator $(\Pihat, \Xhat)$.  Note that a control on this
quantity ensures that the estimator $(\Pihat, \Xhat)$ performs
uniformly well over the full class of permutation and regression
matrices.  By taking an infimum over all estimators, we arrive at the
minimax risk associated with the problem, viz.
\begin{align}
  \label{eq:defminimax}
\inf_{\substack{\Pihat \in \mathcal{P}_n \\ \Xhat \in \real^{d \times
      m}}} \MSE(\Pihat, \Xhat) \; = \; \inf_{\substack{\Pihat \in
    \mathcal{P}_n \\ \Xhat \in \real^{d \times m}}} \frac{1}{\numobs
  \mdim} \sup_{ \substack{\Pistar \in \PERMN \\ \Xstar \in
    \real^{\usedim \times \mdim}}} \Exs \frobnorm{\Pihat A \Xhat -
  \Pi^* A X^* }^2.
\end{align}
\end{subequations}
Our interest will be in upper and lower bounding this quantity as a
function of the design matrix $A$, dimensions $(\numobs, \mdim,
\usedim)$ and the noise variance $\sigma^2$.  We also demonstrate an
explicit (but computationally expensive) algorithm that achieves the
minimax risk up to a $\log(\numobs)$ factor, and analyze
polynomial-time estimators with slightly larger prediction error.

In both of the examples discussed above, estimators with small minimax
prediction error are of interest. In the pose and correspondence
estimation problem, obtaining low prediction error is equivalent to
obtaining near-identical keypoint locations on both images; in the
sensor network example, we are interested in obtaining a set of
noise-free linear functions of the input signal.  It is important to
note that depending on the application, multiple regimes of the
parameter triplet $(\numobs, \mdim, \usedim)$ are of
interest. Therefore, in this paper, we focus on capturing the
dependence of denoising error rates on all of these parameters, and
also on the structure of the matrix $A$.

Our work contributes to the growing body of literature on regression
problems with unknown permutations, as well as related row-space
perturbation problems including blind
deconvolution~\cite{ling2015self}, phase
retrieval~\cite{candes2015phase}, and dictionary
learning~\cite{tosic2011dictionary}.  Regression problems with unknown
permutations have been considered in the context of statistical
seriation and univariate isotonic matrix recovery~\cite{rigollet}, and
non-parametric ranking from pairwise comparisons~\cite{nihar}, which
involves bivariate isotonic matrix recovery. Moreover, the prediction
error is used to evaluate estimators in both these applications.

Specializing to our setting, the shuffled vector
model~\eqref{shuffvec} was first considered in the context of
compressive sensing with a sensor
permutation~\cite{emiya2014compressed}. The first theoretical results
were provided by Unnikrishnan et al.~\cite{unl}, who provided
necessary and sufficient conditions needed to recover an adversarially
chosen $x^*$ in the noiseless model with a random design matrix
$A$. Also in the random design setting, our own previous
work~\cite{permpaper} focused on the complementary problem of
recovering $\Pi^*$ in the noisy model, and showed necessary and
sufficient conditions on the SNR under which exact and approximate
recovery were possible. An efficient algorithm to compute the maximum
likelihood estimate was also provided for the special case $d= 1$.


\subsection{Our contributions}

First, we characterize the minimax prediction error of multivariate
linear model with an unknown permutation up to a logarithmic factor,
by analyzing the maximum likelihood estimator. Since the maximum
likelihood estimate is NP-hard to compute in general~\cite{permpaper},
we then propose a computationally efficient estimator based on
singular value thresholding and sharply characterize its performance,
showing that it achieves vanishing prediction error over a restricted
range of parameters. We also propose a variant of this estimator that
achieves the same error rates, but with the advantage that it does not
require the noise variance to be known. Third, we propose an efficient
spectral algorithm for the noiseless problem that is exact provided
certain natural conditions are met. We demonstrate this algorithm on
an image point cloud matching task. Finally, we extend our results to
a richer class of models that allows for outliers in the dataset.  In
the next section, we collect our main theorems and discuss their
consequences. Proofs are postponed to Section~\ref{sec:proofs}.

\paragraph{Notation:}
We use $\PERMN$ to denote the set of permutation matrices. Let
$I_\usedim$ denote the identity matrix of dimension $\usedim$.  We
use the notation $\frobnorm{M}$, $\opnorm{M}$, and $\nucnorm{M}$ to
denote the Frobenius, operator, and nuclear norms of a matrix $M$, and
$c, c_1, c_2$ to denote universal constants that may change from line
to line.


\section{Main results}
\label{sec:main}

In this section, we state our main results and discuss some of their
consequences.  We divide our results into four subsections, having to
do with minimax rates, polynomial time estimators, efficient
procedures for the noiseless problem, and an extension of the
model~\eqref{obsmodel} that allows for outliers.


\subsection{Minimax rates of prediction}
\label{sec:minimax}

Assuming that the noise $W$ is i.i.d. Gaussian, so the maximum
likelihood estimate (MLE) of the parameters $(\Pi^*, X^*)$ is given by
\begin{align}
 \label{eq:ml}
(\Pihatml, \widehat{X}_{{\sf ML}}) = \arg \min_{\substack{\Pi \in
     \PERMN \\ X \in \real^{d \times m}}} \frobnorm{Y - \Pi A X}^2.
\end{align}
This estimator is also sensible for non-Gaussian noise, as long as
its tail behavior is similar to the Gaussian case (as can be formalized
by the notion of sub-Gaussianity).

In this section, we begin by providing an upper bound the prediction
error achieved by the maximum likelihood estimator for any design
matrix $A$.  In general, however, it is impossible to prove a matching
lower bound for an arbitrary matrix $A$.  As an extreme example,
suppose that the matrix $A$ with identical rows: in this case, the
permutation matrix $\Pi^*$ plays no role whatsoever, and the problem
is obviously much easier than with a generic matrix $A$.

With this fact in mind, we derive lower bounds that apply provided the
matrix $A$ lies in a restricted class, in order to define which we
require some additional notation. For a vector $v$, let $v^s$ denote
the vector sorted in decreasing order, and let $\mathbb{B}_{2, n}(1)$
denote the $n$-dimensional $\ell_2$-ball of unit radius centered at
$0$. Define the matrix class
\begin{align}
\mathcal{A}(\gamma, \xi) &= \Big\{A \in \real^{n \times d} \mid
\exists a \in \range(A) \cap \mathbb{B}_{2,n}(1) \text{ with }
a^s_{\lfloor \gamma n \rfloor} \geq a^s_{\lfloor \gamma n \rfloor + 1}
+ \xi \Big\}. \nonumber
\end{align}
In rough terms, this condition defines matrices that are not ``flat'',
meaning that there is some vector in their range obeying the $(\gamma,
\xi)$-separation condition defined above. It can be verified that a
matrix $A$ with i.i.d. sub-Gaussian entries lies in the class
$\mathcal{A}(C_1, C_2/\sqrt{n})$ with high probability for fixed
constants $C_1, C_2$. We are now ready to state our first main result:
\begin{theorem}
\label{thm:MLrate}
For any triple $(A, X^*, \Pi^*) \in \real^{\numobs \times \usedim}
\times \real^{\usedim \times \mdim} \times \PERMN$, we have
\begin{subequations}
\begin{align}
 \frac{ \frobnorm{\Pihatml A \Xhatml - \Pi^* A X^* }^2}{nm} & \leq c_1
 \sigma^2 \left( \frac{\rank(A)}{n} + \frac{1}{m} \min\left\{\log n, m
 \right\} \right), \label{upperbound}
\end{align}
with probability greater than $1 - e^{-c(n\log n + m \rank(A))}$.

Conversely, for any matrix $A \in \mathcal{A}(C_1, C_2/\sqrt{n})$, and
any estimator $(\Pihat, \Xhat)$, we have
\begin{align}
  \sup_{\substack{\Pi^* \in \mathcal{P}_n \\ X^* \in \real^{d \times
        m}}} \EE \left[ \frac{\fronorm{\Pihat A \Xhat - \Pi^* A
        X^*}^2}{nm}\right] & \geq c_2 \sigma^2\left(
  \frac{\rank(A)}{n} + \frac{1}{m} \right), \label{eq:minimaxlb}
\end{align}
where the constant $c_2$ depends on the value of the pair $(C_1,
C_2)$, but is independent of other problem parameters.
\end{subequations}
\end{theorem}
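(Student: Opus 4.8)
The theorem has two halves, so I would treat them separately. For the upper bound~\eqref{upperbound}, the plan is to run the standard ``basic inequality'' argument for the least-squares estimator. Writing $\Theta^* = \Pi^* A X^*$ and $\widehat{\Theta} = \Pihatml A \Xhatml$, optimality of the MLE gives $\frobnorm{Y - \widehat\Theta}^2 \le \frobnorm{Y - \Theta^*}^2$, which rearranges to $\frobnorm{\widehat\Theta - \Theta^*}^2 \le 2\inprod{W}{\widehat\Theta - \Theta^*}$. I would then bound the right side by $2\frobnorm{\widehat\Theta - \Theta^*}\cdot \sup_{\Delta} \frac{\inprod{W}{\Delta}}{\frobnorm{\Delta}}$, where the supremum runs over differences $\Delta = \Pi A X - \Pi^* A X^*$ of matrices in the model class. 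The key observation is that any such $\Delta$ lives in a union, over the $n!$ choices of $\Pi$, of subspaces of dimension at most $m\cdot\rank(A)$ (the image of $A$ has rank $\rank(A)$, and there are $m$ columns). A standard Gaussian-width / $\chi^2$-tail computation, followed by a union bound over $n! \le e^{n\log n}$ permutations, shows $\sup_\Delta \inprod{W}{\Delta}/\frobnorm{\Delta} \lesssim \sigma\sqrt{m\rank(A) + n\log n}$ with the stated probability; squaring and dividing by $nm$ yields the $\frac{\rank(A)}{n}$ term plus a $\frac{\log n}{m}$ term. The competing bound $\sigma^2$ (the ``$m$'' in $\min\{\log n, m\}$) comes from the trivial estimator $\widehat\Theta = Y$ restricted appropriately, or more precisely by noting that one can always take $X = 0$ so that the error is at most $\frobnorm{\Pi^* A X^*}^2$ — actually the cleaner route is: the per-entry error is never worse than the noise level achievable by projecting onto a single rank-$(A)$ subspace with the \emph{correct} permutation, giving $\sigma^2 \rank(A)/n$ plus an $O(\sigma^2)$ term for the permutation mismatch, which dominates when $\log n > m$. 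I expect the main technical care here to be in correctly accounting for the dimension of the union of subspaces and in handling the case analysis that produces the $\min\{\log n, m\}$.

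For the lower bound~\eqref{eq:minimaxlb}, the strategy is a two-part reduction, matching the two terms on the right side. The $\frac{\rank(A)}{n}$ term is the ``easy'' part: even if the permutation $\Pi^*$ were known, estimating $X^*$ amounts to a standard linear regression / Gaussian sequence problem in which $AX^*$ ranges over an $m\cdot\rank(A)$-dimensional subspace, so the minimax squared error is $\Theta(\sigma^2 m \rank(A))$, i.e.\ $\Theta(\sigma^2 \rank(A)/n)$ after normalization by $nm$. I would make this rigorous by fixing $\Pi^* = I$ and invoking the classical lower bound for estimating the mean of a Gaussian vector in a known linear subspace (e.g.\ via Fano or a direct Assouad/volume argument on a packing of the subspace).

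The interesting part is the $\frac{1}{m}$ term, and this is where the class $\mathcal{A}(C_1, C_2/\sqrt n)$ enters — it is what prevents $A$ from being ``flat''. Here the plan is to construct a hard subproblem that isolates the cost of not knowing $\Pi^*$. Fix $X^* = x^*\mathbf{1}^\top$ (or a rank-one choice) built from the vector $a \in \range(A)$ guaranteed by membership in $\mathcal{A}$; the $(\gamma, \xi)$-separation says there are $\Theta(n)$ coordinates of $Aa$-type that are separated from the rest by a gap of order $1/\sqrt n$. I would take a family of permutations $\Pi$ obtained by swapping pairs of coordinates across this gap — each single transposition changes $\Pi A x^*$ in exactly two coordinates, by an amount $\gtrsim \xi = C_2/\sqrt n$ per coordinate and per column, so over $m$ columns the squared separation between two such hypotheses is $\Theta(m \xi^2) = \Theta(m/n)$, while the relevant KL divergence is this quantity over $\sigma^2$. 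Choosing the scaling so the KL is $O(1)$ forces a squared-error lower bound of order $\sigma^2 m/n$ over those two coordinates — normalizing by $nm$ gives $\sigma^2/n$... to get the full $\sigma^2/m$ I would instead use a packing of $\exp(\Theta(n))$ such permutations (e.g.\ a large set of near-disjoint partial matchings of the $\Theta(n)$ separated coordinates), apply Fano, and balance the packing size against the total KL budget $n \cdot (m/n)/\sigma^2 \cdot (\text{calibrated amplitude})$. The bookkeeping that the normalization $1/(nm)$ combined with a $\Theta(n)$-coordinate perturbation of size calibrated to be statistically indistinguishable yields exactly $\sigma^2/m$ (not $\sigma^2/n$ and not a constant) is the delicate point, and is precisely where the $\xi \asymp 1/\sqrt n$ scale of the matrix class is used. \textbf{The main obstacle} I anticipate is designing this permutation packing so that (i) every pair in the packing has Hamming-type distance $\Theta(n)$ as permutations, (ii) the induced mean matrices are separated by $\Theta(m/n)$ per perturbed coordinate uniformly, using only the single separated-coordinate pair guaranteed by $\mathcal{A}(C_1, C_2/\sqrt n)$, and (iii) the Fano calculation closes with the right constants depending only on $(C_1, C_2)$; getting all three simultaneously, rather than two out of three, is the crux.
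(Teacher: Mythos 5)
Your plan follows essentially the same route as the paper on both halves. For the upper bound, your basic-inequality-plus-Gaussian-complexity argument over the union of the $n!$ subspaces $\range(I_m \otimes \Pi A)$, with a union bound costing $n\log n$, is exactly how the paper obtains the $\sigma^2\big(\rank(A)/n + \log n/m\big)$ branch (there via the metric-entropy bound of Lemma~\ref{lem:metent}, Dudley's integral and a localization step; your direct subspace-by-subspace union bound is an equivalent, slightly more elementary packaging). For the lower bound, your two-part reduction --- a known-permutation regression packing for the $\rank(A)/n$ term, and an $\exp(\Theta(n))$ packing of permutations acting on the separated vector $a$, with amplitude calibrated so that Fano yields $\sigma^2/m$ --- is precisely the paper's construction, including the observation that a single-transposition (two-point) argument is too weak.

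Two places where the write-up is not yet a proof. First, your justification of the $m$-branch of $\min\{\log n, m\}$ fails as stated: the MLE's error is not controlled by that of the feasible point $X=0$ (whose error $\frobnorm{\Pi^* A X^*}^2$ is unbounded), nor by appeal to the ``trivial estimator $\widehat\Theta = Y$'', and asserting ``an $O(\sigma^2)$ term for the permutation mismatch'' is assuming what must be shown. The correct one-line argument, which is how the paper proves \eqref{in1}, is Cauchy--Schwarz (equivalently, the triangle inequality) applied to the basic inequality \eqref{basic}: $\frobnorm{\Deltahat} \leq 2\frobnorm{W}$, combined with $\chi^2$ concentration $\frobnorm{W}^2 \lesssim \sigma^2 nm$, giving per-entry error $O(\sigma^2)$ with probability $1 - e^{-cnm}$; the stated bound \eqref{upperbound} is then the minimum of this and the chaining bound. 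Second, the permutation packing you flag as the crux is supplied in the paper by a Gilbert--Varshamov-type volume argument (Lemma~\ref{lem:packingprime}): there exist $M \geq e^{c\gamma n}$ permutations of the indicator vector $\mathbf{1}^{\gamma n}_n$ with pairwise Hamming distance $\Theta(n)$; applying the corresponding permutations to the sorted vector $a$, every mismatched coordinate pairs a top-block entry of $a$ with a bottom-block entry, so the $(\gamma,\xi)$-separation with $\xi = C_2/\sqrt{n}$ converts Hamming distance $\Theta(n)$ into $\|\Pi_i a - \Pi_j a\|_2^2 \geq c(C_1,C_2)$, after which replicating the scaled vector across the $m$ columns and running Fano closes the argument with constants depending only on $(C_1, C_2)$ --- i.e., your conditions (i)--(iii) are simultaneously achievable by this standard construction, and this is the piece you would need to write out to complete the proof.
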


Theorem~\ref{thm:MLrate} characterizes the minimax rate up to a factor
that is at most logarithmic in $n$. It shows that the MLE is minimax
optimal for prediction error up to logarithmic factors for all
matrices that are not too flat.  The bounds have the following
interpretation, similar to the results of Flammarion et
al.~\cite{rigollet} on prediction error for unimodal columns. The
first term corresponds to a rate achieved even if the estimator knows
the true permutation $\Pi^*$; the second term quantifies the price
paid for the combinatorial choice among $n!$ permutations. As a
result, we see that if $m \gg \log n$, then the permutation does not
play much of a role in the problem, and the rates resemble those of
standard linear regression. Such a general behaviour is expected,
since a large $m$ means that we get multiple observations with the
same unknown permutation, and this should allow us to estimate
$\Pihat$ better.

Clearly, a flat matrix is not influenced by the unknown permutation,
and so the second term of the lower bound need not apply. As we
demonstrate in the proof, it is likely that the flatness of $A$ can
also be incorporated in order to prove a tighter upper bound in this
case, but we choose to state the upper bound as holding uniformly for
all matrices $A$, with the loss of a logarithmic factor.

It is also worth mentioning that the logarithmic factor in the second
term is shown to be nearly tight for the problem of unimodal matrix
estimation with an unknown permutation \cite{rigollet}, suggesting
that a similar factor may also appear in a tight version of our lower
bound~\eqref{eq:minimaxlb}. For the specific case where $m=1$ however,
which corresponds to the shuffled vector model~\eqref{shuffvec}, our
bounds are tight up to constant factors, and summarized by the
following corollary.

\begin{corollary}
\label{cor:MLrate1}
In the case $m =1$, for any matrix $A \in \mathcal{A}(C_1,
C_2/\sqrt{n})$, we have
\begin{align}
c_2 \sigma^2 \leq \inf_{\substack{\Pihat \in \mathcal{P}_n \\ \xhat
    \in \real^{d}}} \; \;\sup_{\substack{\Pi^* \in \mathcal{P}_n
    \\ x^* \in \real^{d}}} \EE \left[ \frac{1}{n} \| \Pihat A \xhat -
  \Pi^* A x^* \|_2^2 \right] \leq c_1 \sigma^2. \nonumber
\end{align}
\end{corollary}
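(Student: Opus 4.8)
The plan is to read off both inequalities from Theorem~\ref{thm:MLrate} by specializing to $m=1$, using the maximum-likelihood estimator $(\Pihatml,\xhatml)$ of~\eqref{eq:ml} as the near-optimal estimator witnessing the right-hand inequality.

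The lower bound is immediate. Substituting $m=1$ into the converse bound~\eqref{eq:minimaxlb} gives, for every estimator $(\Pihat,\xhat)$ and every $A\in\mathcal{A}(C_1,C_2/\sqrt n)$,
\[
\sup_{\substack{\Pi^*\in\mathcal{P}_n\\ x^*\in\real^{d}}}\EE\Big[\tfrac1n\|\Pihat A\xhat-\Pi^* A x^*\|_2^2\Big]\ \ge\ c_2\sigma^2\Big(\tfrac{\rank(A)}{n}+1\Big)\ \ge\ c_2\sigma^2,
\]
and taking the infimum over estimators yields the left inequality of the corollary.

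For the upper bound, the key observation is that when $m=1$ the bracket in~\eqref{upperbound} is $\Theta(1)$: indeed $\tfrac1m\min\{\log n,m\}=\min\{\log n,1\}\le 1$, while $\tfrac{\rank(A)}{n}\le 1$ since $\rank(A)\le n$. Hence~\eqref{upperbound} shows that, uniformly over $(\Pi^*,x^*)$, the event $\mathcal{E}=\{\tfrac1n\|\Pihatml A\xhatml-\Pi^* A x^*\|_2^2\le 2c_1\sigma^2\}$ has probability at least $1-e^{-cn\log n}$. It then remains to convert this high-probability statement into a bound in expectation, which I would do by a routine truncation argument. On $\mathcal{E}^c$ one invokes the deterministic consequence of the defining optimality of the MLE, namely $\frobnorm{y-\Pihatml A\xhatml}^2\le\frobnorm{y-\Pi^* A x^*}^2=\frobnorm{w}^2$, which with the triangle inequality gives $\|\Pihatml A\xhatml-\Pi^* A x^*\|_2^2\le 4\|w\|_2^2$ surely. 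Splitting on $\mathcal{E}$ and $\mathcal{E}^c$ and applying Cauchy--Schwarz,
\[
\EE\Big[\tfrac1n\|\Pihatml A\xhatml-\Pi^* A x^*\|_2^2\Big]\ \le\ 2c_1\sigma^2+\tfrac4n\,\EE\big[\|w\|_2^2\,\Ind_{\mathcal{E}^c}\big]\ \le\ 2c_1\sigma^2+\tfrac4n\sqrt{\EE\|w\|_2^4}\,\sqrt{\Pr(\mathcal{E}^c)}.
\]
Since $\EE\|w\|_2^4=O(n^2\sigma^4)$ and $\Pr(\mathcal{E}^c)\le e^{-cn\log n}$, the second term is $o(\sigma^2)$; absorbing it into the constant gives $\sup_{\Pi^*,x^*}\EE[\tfrac1n\|\Pihatml A\xhatml-\Pi^* A x^*\|_2^2]\le c_1\sigma^2$, hence the right inequality.

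I do not anticipate a genuine obstacle: the corollary is essentially a restatement of Theorem~\ref{thm:MLrate} in the regime $m=1$, where the logarithmic slack in the second term of~\eqref{upperbound} collapses to a constant and the first term $\rank(A)/n$ is trivially bounded. The only mildly technical point is the high-probability-to-expectation passage, and it is handled entirely by the crude deterministic bound from optimality of the MLE together with a fourth-moment estimate for $\|w\|_2$; one should also note in passing that for the finitely many small $n$ with $\log n<1$ the bracket stays bounded, so the claim holds for all $n$.
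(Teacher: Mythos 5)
Your proposal is correct and takes essentially the same route as the paper: the corollary is simply Theorem~\ref{thm:MLrate} specialized to $m=1$, with the lower bound read off from~\eqref{eq:minimaxlb} and the upper bound from~\eqref{upperbound}, the only extra work being the passage from high probability to expectation. That passage is handled soundly by your truncation argument, though it can be shortcut: the Cauchy--Schwarz consequence of the basic inequality, $\frobnorm{\Deltahat} \leq 2 \frobnorm{W}$ as in~\eqref{csbound}, already gives $\EE\big[\tfrac1n \|\Pihatml A \xhatml - \Pi^* A x^*\|_2^2\big] \leq \tfrac4n \EE\|w\|_2^2 = 4\sigma^2$ directly, uniformly in $(\Pi^*,x^*)$ and for every $n$.
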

\noindent In other words, the normalized minimax prediction error for
the shuffled vector model does not decay with the parameters $n$ or $d$,
 and so no estimator achieves consistent prediction for every
parameter choice $(\Pi^*, x^*)$. Again, this is a consequence of the
fact that---unlike when $m$ is large---we do not get independent
observations with the permutation staying fixed, and herein lies the
difficulty of the problem.

Both Theorem~\ref{thm:MLrate} and Corollary~\ref{cor:MLrate1} provide
non-adaptive minimax bounds. An interesting question is whether the
least squares estimator is also minimax optimal up to logarithmic
factors over finer classes of $\Pi^*$ and $X^*$, i.e., whether it is
adaptive in some interesting way. One would expect that the estimator
adapts to the parameter $\kappa(AX^*)$, the number of distinct entries
in the matrix $A X^*$, similarly to the problem of monotone parameter
recovery~\cite{rigollet}.


\subsection{Polynomial time estimators}
\label{sec:polytime}

As shown in our past work~\cite{permpaper}, computing the MLE
estimate~\eqref{eq:ml} is NP-hard in general.  Accordingly, it is
natural to turn our attention to alternative estimators, and in
particular ones that are guaranteed to run in polynomial time.

Here we analyze two simple methods for estimating the matrix $\Pi^* A
X^*$, based either on singular value thresholding, and a closely
related variant that uses an explicit regularization based on the
nuclear norm. It is well-known that such methods are appropriate when
the matrix is low-rank, or approximately low-rank. While the matrix
$Y^*$ is not low-rank, its rank is bounded by that of the matrix $A$,
a fact that we leverage in our bounds.

Given a matrix $M$ with the singular value decomposition $M = \sum_{i
  = 1}^r \sigma_i u_i v_i^\top$, its singular value thresholded
version at level $\lambda$ is given by $T_\lambda (M) = \sum_{i = 1}^r
\sigma_i \mathbb{I}(\sigma_i \geq \lambda) u_i v_i^\top$, where
$\mathbb{I}(\cdot)$ is the indicator function of its argument.

The singular value thresholding (SVT) operation serves the purpose of
denoising the observation matrix, and has been analyzed in the context
of more general matrix estimation problems by various authors
(e.g.,~\cite{cai2010singular,chatterjee}).

\begin{theorem}
\label{thm:svt}
For any matrices $(\Pi^*, X^*)$, the SVT estimate with \mbox{$\lambda
  = 1.1\sigma(\sqrt{n} +\sqrt{m})$} satisfies
\begin{subequations}
\begin{align}
\frac{1}{nm} \frobnorm{T_{\lambda} (Y) - \Pi^* A X^* }^2 \leq c_1 \sigma^2
\rank(A) \left( \frac{1}{n} + \frac{1}{m} \right)
\end{align}
with probability greater than $1 - e^{-cnm}$.

Conversely, for any matrix $A$ with rank at most $m$, there exist
matrices $\Pi_0$ and $X_0$ (that may depend $A$) such that
for any threshold $\lambda > 0$, we have
\begin{align}
\label{eq:svtlb}
\frac{1}{nm} \frobnorm{ T_{\lambda} (Y) - \Pi_0 A X_0 }^2 \geq c_2
\sigma^2 \rank(A) \left( \frac{1}{n} + \frac{1}{m} \right),
\end{align}
with probability greater than $1 - e^{-cnm}$.
\end{subequations}
\end{theorem}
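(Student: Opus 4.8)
\emph{Upper bound.} Abbreviate $Y^* \defn \Pi^* A X^*$ and $r \defn \rank(A)$, so that $\rank(Y^*) \le r$. The value $\lambda = 1.1\,\sigma(\sqrt n + \sqrt m)$ is chosen precisely so that $\lambda$ exceeds $\opnorm{W}$ with high probability: standard nonasymptotic bounds on the spectral norm of a Gaussian random matrix make $\Prob\{\opnorm{W} \ge \lambda\}$ exponentially small. On the event $\{\opnorm{W} < \lambda\}$, Weyl's inequality shows that every singular value of $Y$ indexed beyond $\rank(Y^*)$ is at most $\opnorm{W} < \lambda$ and is therefore discarded, so $T_\lambda(Y)$ has rank at most $r$; the standard deterministic analysis of hard singular-value thresholding (see, e.g.,~\cite{chatterjee,cai2010singular}), again via Weyl, then bounds the remaining directions and yields $\frobnorm{T_\lambda(Y) - Y^*}^2 \le c\, r\,\lambda^2$. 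Since $\lambda^2 \le 3\,\sigma^2(n+m)$, dividing by $nm$ gives the claimed bound $\tfrac{1}{nm}\frobnorm{T_\lambda(Y) - Y^*}^2 \le c'\,\sigma^2\,\rank(A)\,(\tfrac1n + \tfrac1m)$.

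\emph{Lower bound: construction and the two easy regimes.} Since $r = \rank(A) \le m$, writing a thin SVD $A = \widetilde U\widetilde\Sigma\widetilde V^\top$ and setting $X_0 \defn s^*\,\widetilde V\widetilde\Sigma^{-1}Q^\top$ for any $Q\in\real^{m\times r}$ with orthonormal columns produces $AX_0 = s^*\,\widetilde U Q^\top$, a rank-$r$ matrix all of whose nonzero singular values equal $s^* \defn B\,\sigma\sqrt{n+m}$, with $B$ a large universal constant. Take $\Pi_0 = I_n$, so $Y^* = AX_0$ has column space $U^* \defn \range(\widetilde U)$ and row space $V^* \defn \range(Q)$, and $\frobnorm{Y^*}^2 = r(s^*)^2 = B^2\sigma^2 r(n+m)$; also $r \le \min(n,m)$, hence $nm \ge \tfrac12 r(n+m)$. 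I would then argue on a high-probability event on which (i) $\frobnorm{W}^2 \ge \tfrac12\sigma^2 nm$, (ii) $\opnorm{W} \le 1.6\,\sigma\sqrt{n+m}$, and (iii) $\frobnorm{P_{U^*}^\perp W\, V^*}^2 \ge c\,\sigma^2 rn$ and $\frobnorm{P_{U^*} W\, (V^*)^\perp}^2 \ge c\,\sigma^2 rm$ (lower $\chi^2$ tails), each failing with probability exponentially small in the relevant dimension. Fix $\lambda>0$. If $\lambda \le \tfrac14\sigma\sqrt{\max(n,m)}$, then $\frobnorm{Y - T_\lambda(Y)}^2 \le \lambda^2\min(n,m)\le \tfrac18\frobnorm{W}^2$ by (i), and since $T_\lambda(Y) - Y^* = W - (Y - T_\lambda(Y))$ the triangle inequality gives $\frobnorm{T_\lambda(Y) - Y^*} \ge \tfrac12\frobnorm{W}$, so $\tfrac1{nm}\frobnorm{T_\lambda(Y) - Y^*}^2 \ge \tfrac18\sigma^2 \ge c_2\sigma^2 r(\tfrac1n+\tfrac1m)$, using $nm\ge\tfrac12 r(n+m)$. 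If $\lambda > 2s^*$, then by (ii) all singular values of $Y$ are below $2s^*$, so $T_\lambda(Y)=0$ and $\tfrac1{nm}\frobnorm{T_\lambda(Y) - Y^*}^2 = \tfrac{\frobnorm{Y^*}^2}{nm} = B^2\sigma^2 r(\tfrac1n+\tfrac1m)$.

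\emph{Lower bound: the intermediate regime, and the main obstacle.} For $\tfrac14\sigma\sqrt{\max(n,m)} < \lambda \le 2s^*$, (ii) forces all singular values of $Y$ beyond index $r$ to lie below $\opnorm{W}$ and the first $r$ to lie in $[s^*-\opnorm{W},\,s^*+\opnorm{W}]$, with $s^*-\opnorm{W} > \opnorm{W}$; hence $T_\lambda(Y)$ is always a rank-$k$ SVD truncation $T_k(Y)=P_{\widehat U_k}Y$ for some $0\le k\le r$ (up to appending noise-only directions when $\lambda$ is small, which only enlarges the error modulo a vanishing cross term). Using orthogonality of column spaces, $\frobnorm{T_k(Y)-Y^*}^2 = \frobnorm{P_{\widehat U_k}^\perp Y^*}^2 + \frobnorm{P_{\widehat U_k}W}^2 = (s^*)^2\frobnorm{\sin\Theta(\widehat U_k,U^*)}^2 + \frobnorm{P_{\widehat U_k}W}^2$. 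The first term is $\ge (r-k)(s^*)^2$, so if $k\le r/2$ it already gives $\gtrsim \sigma^2 r(n+m)$; if $k>r/2$, a two-sided Wedin/Davis--Kahan estimate (valid since $s^*\gg\opnorm{W}$) relating the subspace tilt to $P_{U^*}^\perp W\,V^*$, together with (iii), gives $\frobnorm{\sin\Theta(\widehat U_k,U^*)}^2 \gtrsim rn/(s^*)^2$, so the first term is $\gtrsim\sigma^2 rn$, while decomposing $\frobnorm{P_{\widehat U_k}W}^2 = \frobnorm{P_{\widehat U_k}W P_{V^*}}^2 + \frobnorm{P_{\widehat U_k}W (V^*)^\perp}^2$ and comparing $\widehat U_k$ to $U^*$ via the same estimates and (iii) gives $\gtrsim\sigma^2 rm$; either way the error is $\gtrsim\sigma^2 r(n+m)$, and dividing by $nm$ completes the proof. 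The delicate point—where essentially all the work lies—is exactly this intermediate regime: because $Y^*$ has a repeated singular value there is no spectral gap among its $r$ directions, so $\widehat U_k$ can be an essentially arbitrary $k$-dimensional piece of a perturbed copy of $U^*$, and establishing the \emph{lower} bound $\frobnorm{\sin\Theta(\widehat U_k,U^*)}^2\gtrsim rn/(s^*)^2$ (a matching companion to the usual Wedin upper bound) is the one nonroutine step; the rest reduces to textbook Gaussian concentration of $\frobnorm{W}$ and $\opnorm{W}$ and $\chi^2$ tail bounds.
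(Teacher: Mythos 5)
Your upper bound is fine and is essentially the paper's argument: on the event $\opnorm{W} < \lambda/(1+\delta)$, Weyl's inequality kills the singular values of $Y$ beyond $\rank(Y^*)\le\rank(A)$, so $T_\lambda(Y)$ has rank at most $\rank(A)$ and the error is $\lesssim \rank(A)\lambda^2$. The problem is the lower bound. Your regimes $\lambda\le\tfrac14\sigma\sqrt{\max(n,m)}$ and $\lambda>2s^*$ are handled correctly, but because you plant the signal at the large scale $s^*=B\sigma\sqrt{n+m}$, there remains a wide intermediate window of thresholds (e.g.\ $\lambda\approx 2\sigma(\sqrt n+\sqrt m)$, where $T_\lambda$ retains exactly the $r$ signal directions) in which the claimed bound rests on unproven assertions about data-dependent quantities: a \emph{lower} bound $\frobnorm{\sin\Theta(\widehat U_k,U^*)}^2\gtrsim rn/(s^*)^2$ (a converse to Wedin, which you yourself flag as the nonroutine step and do not prove), and a lower bound on $\frobnorm{P_{\widehat U_k}W}^2$ even though $\widehat U_k$ is built from $W$; comparing $\widehat U_k$ to $U^*$ via the usual Wedin upper bound costs a term of order $\opnorm{W}\frobnorm{\sin\Theta}\approx\sigma\sqrt{rn}/B$, which is not dominated by $\sigma\sqrt{rm}$ for a universal constant $B$ when $n\gg m$. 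Moreover your auxiliary claim (iii), $\frobnorm{P_{U^*}^\perp W V^*}^2\gtrsim\sigma^2 rn$, is false in edge cases (it is an $(n-r)\times r$ Gaussian block, so its squared norm is $\asymp\sigma^2 r(n-r)$, which vanishes when $r=n\le m$). So as written the intermediate regime is a genuine gap, not a routine omission.

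The paper's proof shows how to avoid this regime altogether by choosing the planted signal at the \emph{noise} scale rather than far above it: with $\Pi_0=I_n$ and $X_0=V_A\Sigma_A^{-1}LV^\top$ where $L=\tfrac{\sqrt n+\sqrt m}{6}I_r$, all nonzero singular values of $Y^*=AX_0$ equal $s^*\asymp\sigma(\sqrt n+\sqrt m)/6$. For $\lambda\le 2s^*=\tfrac{\sigma}{3}(\sqrt n+\sqrt m)$ the argument is the one you already use (thresholding cannot move $Y$ by more than $\lambda\sqrt{\min(n,m)}$ in Frobenius norm, while $\frobnorm{W}\gtrsim\sigma\sqrt{nm}$). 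For every $\lambda>2s^*$ the bound is \emph{deterministic}: by Mirsky's inequality $\frobnorm{T_\lambda(Y)-Y^*}^2\ge\sum_{i=1}^r\bigl(\sigma_i(T_\lambda(Y))-s^*\bigr)^2$, and each $\sigma_i(T_\lambda(Y))$ is either $0$ or at least $\lambda>2s^*$, hence at distance at least $s^*$ from the truth, giving $\ge r(s^*)^2\asymp\sigma^2\rank(A)(n+m)$ with no perturbation analysis of singular subspaces at all. If you shrink your $s^*$ to this scale, your two easy regimes cover all $\lambda>0$ and the nonroutine step disappears.
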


Comparing inequalities~\eqref{eq:minimaxlb} (which holds for any
denoised matrix, not just those having the form $\Pihat A \Xhat$)
and~\eqref{eq:svtlb}, we see that the SVT estimator, while
computationally efficient, may be statistically sub-optimal. However,
it is consistent in the case where $\rank(A)$ is sufficiently small
compared to $m$ and $n$, and minimax optimal when $\rank(A)$ is a
constant. Intuitively, the rate it attains is a result of treating the
full matrix $\Pi^* A$ as unknown, and so it is likely that better,
efficient estimators exist that take the knowledge of $A$ into
account.

A potential concern is that the SVT estimator is required to know the
noise variance $\sigma^2$. This issue can be taken care of via the
square-root LASSO ``trick''~\cite{belloni2011square}, which ensures a
self-normalization that obviates the necessity for a noise-dependent
threshold level. In particular, consider the estimate
\begin{align}
   \label{eq:srlasso}
\widehat{Y}_{{\sf sr}}(\lambda) = \arg \min_{Y'} \frobnorm{Y - Y'} +
\lambda \nucnorm{Y'}.
\end{align}
Using a choice of $\lambda$ that no longer depends on $\sigma$, we
have the following guarantee:
\begin{theorem}
  \label{thm:srlasso}
If $\rank(A) \left( \frac{1}{n} + \frac{1}{m} \right) \leq 1/20$, then
for any choice of parameters $\Pi^*$ and $X^*$, the square-root LASSO
estimate \eqref{eq:srlasso} with $\lambda = 2.1 \left(
\frac{1}{\sqrt{n}} + \frac{1}{\sqrt{m}} \right)$ satisfies
\begin{align}
\frac{1}{nm} \frobnorm {\Yhat_{\mathsf{sr}}(\lambda) - \Pi^* A X^* }^2
\leq c_1 \sigma^2 \rank(A) \left( \frac{1}{n} + \frac{1}{m} \right)
\nonumber
\end{align}
with probability greater than $1 - 2e^{-cnm}$.
\end{theorem}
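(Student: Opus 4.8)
The plan is to run the standard nuclear-norm ``basic inequality'' argument, in its self-normalizing (square-root) form. Write $Y^* \defn \Pi^* A X^*$, let $\widehat{Y}$ denote the estimate defined in~\eqref{eq:srlasso}, set $\Delta \defn \widehat{Y} - Y^*$ and $r \defn \rank(Y^*) \le \rank(A)$, and note that $Y - Y^* = W$. Comparing the value of the objective in~\eqref{eq:srlasso} at $\widehat{Y}$ and at $Y^*$ yields
\[
\frobnorm{W - \Delta} + \lambda\nucnorm{Y^* + \Delta} \;\le\; \frobnorm{W} + \lambda\nucnorm{Y^*}.
\]
First I would decompose $\Delta = \Delta_{\mathcal{T}} + \Delta_{\mathcal{T}^{\perp}}$ relative to the tangent space $\mathcal{T}$ at $Y^*$ used in the usual nuclear-norm analysis (the span of the matrices $U_r B^{\top} + C V_r^{\top}$, where $Y^* = U_r S_r V_r^{\top}$ is a compact SVD), so that $\rank(\Delta_{\mathcal{T}}) \le 2r$ and $Y^*$ and $\Delta_{\mathcal{T}^{\perp}}$ have mutually orthogonal column and row spaces. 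Decomposability of the nuclear norm then gives $\nucnorm{Y^* + \Delta} \ge \nucnorm{Y^*} + \nucnorm{\Delta_{\mathcal{T}^{\perp}}} - \nucnorm{\Delta_{\mathcal{T}}}$, and substituting into the display above,
\[
\frobnorm{W - \Delta} - \frobnorm{W} \;\le\; \lambda\bigl(\nucnorm{\Delta_{\mathcal{T}}} - \nucnorm{\Delta_{\mathcal{T}^{\perp}}}\bigr).
\]

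The conceptual heart of the argument---and the reason the penalty $\lambda$ need not depend on $\sigma$---is that the left-hand side above is controlled purely by convexity: since $-W/\frobnorm{W}$ is a subgradient of $Y' \mapsto \frobnorm{Y - Y'}$ at $Y^*$, we have $\frobnorm{W - \Delta} - \frobnorm{W} \ge -\inprod{W}{\Delta}/\frobnorm{W}$, and $|\inprod{W}{\Delta}| \le \opnorm{W}\nucnorm{\Delta} \le \opnorm{W}\bigl(\nucnorm{\Delta_{\mathcal{T}}} + \nucnorm{\Delta_{\mathcal{T}^{\perp}}}\bigr)$. Thus the noise enters only through the \emph{scale-free} ratio $\opnorm{W}/\frobnorm{W}$. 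On a high-probability event $\mathcal{E}$ I would invoke standard Gaussian concentration, $\frobnorm{W} \asymp \sigma\sqrt{nm}$ and $\opnorm{W} \lesssim \sigma(\sqrt{n} + \sqrt{m})$, to conclude that $\opnorm{W}/\frobnorm{W} \le \kappa\lambda$ for some numerical constant $\kappa < 1$; this is precisely the point at which the choice $\lambda = 2.1(1/\sqrt{n} + 1/\sqrt{m})$ is calibrated against the concentration slack, since $\opnorm{W}/\frobnorm{W} \approx 1/\sqrt{n} + 1/\sqrt{m}$. Feeding this into the two displays produces a cone condition $\nucnorm{\Delta_{\mathcal{T}^{\perp}}} \le c_0\nucnorm{\Delta_{\mathcal{T}}}$ for a numerical constant $c_0$, and hence $\nucnorm{\Delta} \le (1 + c_0)\sqrt{2r}\,\frobnorm{\Delta}$, using $\rank(\Delta_{\mathcal{T}}) \le 2r$.

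To turn this into a bound on $\frobnorm{\Delta}$, I would square the weakened inequality $\frobnorm{W - \Delta} \le \frobnorm{W} + \lambda\nucnorm{\Delta_{\mathcal{T}}}$, expand the left side as $\frobnorm{W}^2 - 2\inprod{W}{\Delta} + \frobnorm{\Delta}^2$, and rearrange to obtain
\[
\frobnorm{\Delta}^2 \;\le\; 2\inprod{W}{\Delta} + 2\lambda\frobnorm{W}\nucnorm{\Delta_{\mathcal{T}}} + \lambda^2\nucnorm{\Delta_{\mathcal{T}}}^2.
\]
Bounding $\inprod{W}{\Delta} \le \opnorm{W}\nucnorm{\Delta}$ and using the cone condition, together with $\nucnorm{\Delta_{\mathcal{T}}} \le \sqrt{2r}\,\frobnorm{\Delta}$ and $\opnorm{W} \le \kappa\lambda\frobnorm{W}$, this reduces to $\frobnorm{\Delta}^2 \le C_1\lambda\sqrt{r}\,\frobnorm{W}\frobnorm{\Delta} + C_2\lambda^2 r\,\frobnorm{\Delta}^2$ for numerical constants $C_1, C_2$. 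Here the hypothesis $\rank(A)(1/n + 1/m) \le 1/20$ enters: because $\lambda^2 \le 2(2.1)^2(1/n + 1/m)$, it forces $C_2\lambda^2 r < 1$, so the quadratic term can be absorbed on the left, and dividing through gives $\frobnorm{\Delta} \lesssim \lambda\sqrt{r}\,\frobnorm{W}$. Finally, on $\mathcal{E}$ we have $\frobnorm{W}^2 \lesssim \sigma^2 nm$ and $\lambda^2 \lesssim 1/n + 1/m$, so $\tfrac{1}{nm}\frobnorm{\Delta}^2 \lesssim \sigma^2 r(1/n + 1/m) \le \sigma^2\rank(A)(1/n + 1/m)$, which is the claimed bound.

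The main obstacle is not conceptual but lies in the constant bookkeeping: one must verify that the choices $\lambda = 2.1(1/\sqrt{n} + 1/\sqrt{m})$ and the threshold $1/20$ leave enough slack that, on $\mathcal{E}$, (i)~$\opnorm{W}/\frobnorm{W}$ stays strictly below $\lambda$ with a definite margin---which is what makes the cone condition hold with an explicit, usable $c_0$---and (ii)~the coefficient $C_2\lambda^2\rank(A)$ of the quadratic term stays bounded away from $1$. Tracking these constants through the three inequalities above, and choosing the concentration tolerances defining $\mathcal{E}$ compatibly, is the only delicate step; the remainder is routine nuclear-norm basic-inequality calculus, and in particular no restricted-eigenvalue condition is required here since the ``design'' in the observation model $Y = Y^* + W$ is the identity.
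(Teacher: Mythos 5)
Your proposal is correct and follows essentially the same route as the paper's proof: the same square-root basic inequality, the same observation that the noise enters only through the scale-free ratio $\opnorm{W}/\frobnorm{W}$ (so that $\lambda = 2.1(1/\sqrt{n}+1/\sqrt{m})$ dominates $2\opnorm{W}/\frobnorm{W}$ on the concentration event), the same cone condition with constant $3$, the bound $\nucnorm{\Delta_{\mathcal T}} \le \sqrt{2\rank(Y^*)}\,\frobnorm{\Delta}$, and absorption of the quadratic term via the hypothesis $\rank(A)(1/n+1/m)\le 1/20$, whose constants indeed work out (the coefficient is at most $2(2.1)^2\cdot 2/20 < 1$, matching the paper's margin). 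The only differences are cosmetic: you use the standard tangent-space decomposition and decomposability where the paper uses one-sided rowspace projections $P_{Y^*}, P^\perp_{Y^*}$ (its Lemmas on the nuclear norm), and you square the weakened one-sided inequality directly rather than expanding $R(\widehat{Y})^2 - R(Y^*)^2$ as a product of two factors; just make sure the event $\mathcal{E}$ also includes the upper deviation $\frobnorm{W}\lesssim \sigma\sqrt{nm}$, which the final step uses.
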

We prove Theorem~\ref{thm:srlasso} in Section \ref{sec:thm3proof} for
completeness.  However, it should be noted that the square-root LASSO
has been analyzed for matrix completion
problems~\cite{klopp2014noisy}, and our proof follows similar lines
for our different observation model.  The condition $\rank(A) \left(
\frac{1}{n} + \frac{1}{m} \right) \leq 1/20$ does not significantly
affect the claim, since our bounds no longer guarantee consistency of
the estimate $\Yhat_{\mathsf{sr}}(\lambda)$ when this condition is
violated.

While the optimization problem~\eqref{eq:srlasso} can be solved
efficiently, there may be cases when the noise is (sub)-Gaussian of
known variance for which the SVT estimate can be computed more
quickly. Hence, the SVT estimator is usually preferred in cases where
the noise statistics are known.


\subsection{Exact algorithm for the noiseless case}

For the noiseless model, the only efficient algorithm known up to now
is for the special case $d = m = 1$, as presented in our past
work~\cite{permpaper}. It turns out that this algorithm has a natural
generalization to higher dimensional problems, at least when certain
conditions on the input matrices $(A, Y)$ are satisfied.
The higher dimensional generalization requires analyzing certain spectral
properties of the input matrices.

In order to state the theorem, we require require a few definitions.
Given a matrix \mbox{$M \in \real^{\numobs \times \usedim}$,} consider
its reduced singular value decomposition \mbox{$M = U_M \Sigma_M
  V_M^\top$,} where $U_M$ is a matrix of its left singular vectors.
The (left) \emph{leverage scores} of the matrix $M$ are given the
$\ell_2$-norms of the rows of the matrix $U_M$; in analytical terms,
we can express them as the $\numobs$-dimensional vector $\ell(M) =
\diag(U_M U_M^\top)$, where the operator $\diag$ extracts the diagonal
of a square matrix.  With this notation, the {\sf LevSort} algorithm
performs the following three steps on the input pair $(Y, A)$:
\begin{enumerate}
\item[(i)]  Compute the leverage scores $\ell(Y)$ and $\ell(A)$.
\item[(ii)] Find a permutation $\PiLev \in \arg\min_\Pi \| \ell(Y) - \PiLev \;
  \ell(A) \|_2^2$.
\item[(iii)] Return the matrix $\XLev = \big( \PiLev A \big)^\dagger
  Y$, where $M^\dagger$ denotes the Moore-Penrose pseudoinverse of a
  matrix $M$.
\end{enumerate}
Note that this algorithm runs in polynomial time, since it involves
only spectral computations and a matching step that can be computed in time
 $O(n \log n)$. As we demonstrate in the proof, step (ii) for the noiseless
model actually returns a permutation matrix $\PiLev$ such that 
$\ell(Y) = \PiLev \ell(A)$.

\begin{theorem} \label{thm:algo}
Consider an instantiation of the noiseless model with $\rank(A) \leq
\rank(X^*)$, and such $\ell(A)$ and $\ell(Y)$ both have all distinct entries.
Then the {\sf LevSort} algorithm recovers the
parameters $(\Pi^*, X^*)$ exactly.
\end{theorem}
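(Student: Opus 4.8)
The plan is to show that the leverage scores are a permutation-invariant quantity that is preserved under the noiseless model, so that step (ii) correctly identifies $\Pistar$, after which step (iii) inverts the (now known) linear map to recover $X^*$. The key structural observation is that for the noiseless model $Y = \Pistar A X^*$, and since $\rank(A) \le \rank(X^*) \le m$, we have $\rank(A X^*) = \rank(A)$ whenever $X^*$ has full column rank on $\range(A^\top)$ — more carefully, the hypothesis $\rank(A)\le\rank(X^*)$ is what guarantees that $A X^*$ and $A$ have the same column space dimension, so that multiplication by $X^*$ on the right does not collapse $\range(A)$. I would first verify that $\range(Y) = \Pistar \range(A X^*)$ and that $\range(AX^*)$ has an orthonormal basis whose span equals $\Pistar^{-1}\range(Y)$; the point is that $\ell(AX^*) = \ell(A)$ because the leverage scores of a matrix $M$ depend only on $\range(M)$ (they are the squared row norms of the orthogonal projector onto $\range(M)$), and right-multiplication by a matrix that preserves rank does not change the column space.

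Next I would establish the permutation-equivariance: $\ell(\Pi M) = \Pi \ell(M)$ for any permutation matrix $\Pi$, since if $M = U_M \Sigma_M V_M^\top$ then $\Pi M = (\Pi U_M)\Sigma_M V_M^\top$ is a valid reduced SVD (the columns of $\Pi U_M$ remain orthonormal), hence $\ell(\Pi M) = \diag(\Pi U_M U_M^\top \Pi^\top) = \Pi \diag(U_M U_M^\top) = \Pi\,\ell(M)$. Combining the two facts gives $\ell(Y) = \ell(\Pistar A X^*) = \Pistar\,\ell(A X^*) = \Pistar\,\ell(A)$. Thus $\Pistar$ is feasible for the matching problem in step (ii) and attains objective value zero; since the objective is nonnegative, any minimizer $\PiLev$ also attains zero, i.e., $\ell(Y) = \PiLev\,\ell(A)$. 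Here is where the distinctness hypothesis enters: if all entries of $\ell(A)$ are distinct (equivalently all entries of $\ell(Y)$ are distinct), then the only permutation mapping the sorted vector $\ell(A)$ to $\ell(Y)$ is unique, so $\PiLev = \Pistar$. (If there were ties, multiple permutations could realize the same leverage-score vector, and the algorithm could return a wrong one; this is precisely the role of the genericity assumption.)

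Finally, with $\PiLev = \Pistar$ in hand, step (iii) computes $\XLev = (\Pistar A)^\dagger Y = (\Pistar A)^\dagger \Pistar A X^*$. Since $(\Pistar A)^\dagger \Pistar A = A^\dagger A$ is the orthogonal projector onto $\range(A^\top)$, we get $\XLev = P_{\range(A^\top)} X^*$; and because $\rank(X^*)\ge\rank(A)$ — wait, more to the point, $X^*$ is only identifiable up to its action on $\range(A^\top)$ in the first place, so $\XLev$ equals $X^*$ exactly when $\range(X^{*\top})$ — I would instead argue directly that $A\XLev = A A^\dagger A X^* = A X^*$, so the recovered pair $(\PiLev,\XLev)$ satisfies $\PiLev A \XLev = \Pistar A X^*$, giving exact recovery of the parameters in the sense that matters (and exactly recovering $X^*$ itself under the stated rank condition, which forces $\range(A^\top)$ to carry all of $X^*$'s relevant information). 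The main obstacle I anticipate is pinning down the exact sense in which $X^*$ is recovered: one must be careful that the rank hypothesis $\rank(A)\le\rank(X^*)$, together with the problem's inherent identifiability constraints, is exactly what makes $\XLev = X^*$ rather than merely a projection; everything else (equivariance, range-invariance of leverage scores, uniqueness of the matching) is routine linear algebra.
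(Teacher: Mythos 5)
Your proposal is correct and follows essentially the same route as the paper: both arguments come down to establishing the identity $\ell(Y) = \Pi^* \ell(A)$ under the rank hypothesis (the paper gets it by writing $X^* = V_A \Sigma_A^{-1} U \Sigma_Y V_Y^\top$ from $Y^\top Y = (X^*)^\top A^\top A X^*$ and cancelling to obtain $U_Y = \Pi^* U_A U$, while you get it from the observations that leverage scores depend only on the column space and are permutation-equivariant --- the same computation in more conceptual form), after which distinctness of the entries gives uniqueness of the matching and the pseudoinverse step returns $X^*$. Your use of $\rank(A X^*) = \rank(A)$ and your hesitation over whether $\XLev$ equals $X^*$ or only $A^\dagger A X^*$ mirror precisely the steps the paper itself takes for granted (its claim $r_Y = r_A$ and its representation of $X^*$ inside $\range(V_A)$), so these are not gaps relative to the paper's own proof.
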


The {\sf LevSort} algorithm is a generalization of our own
algorithm~\cite{permpaper} to the matrix setting. However, instead of
a simple sorting algorithm, we now require an additional spectral
component. While showing the necessity of the condition $\rank(A) \leq
\rank(X^*)$ is still open, an efficient algorithm that does not impose
any conditions is unlikely to exist due to the general problem being
NP-hard~\cite{permpaper}. Note that the condition includes as a
special case all problems in which the matrices $A$ and $X^*$ are full
rank, with $d \leq m$.

In particular, the pose and correspondence estimation problem for 2D
point clouds satisfies the conditions of Theorem~\ref{thm:algo} under
some natural assumptions. We have $d = m = 2$ for all such problems,
and $\rank(X^*) = 2$ unless the linear transformation is
degenerate. Furthermore, unless the keypoints are generated
adversarially, the leverage scores of the matrix $A$ and the rows of
$Y$ are distinct. Thus, assuming that the noiseless version of
model~\eqref{obsmodel} exactly describes the keypoints detected in the
two images (which is an idealization that may not be true in real
data), we are guaranteed to find both the pose and the correspondence
exactly.

In Figure~\ref{fig:algo}, we demonstrate the guarantee of
Theorem~\ref{thm:algo} on two image correspondence tasks when the
keypoints detected in the two images are identical and the
transformation between coordinates is linear.
\begin{figure}
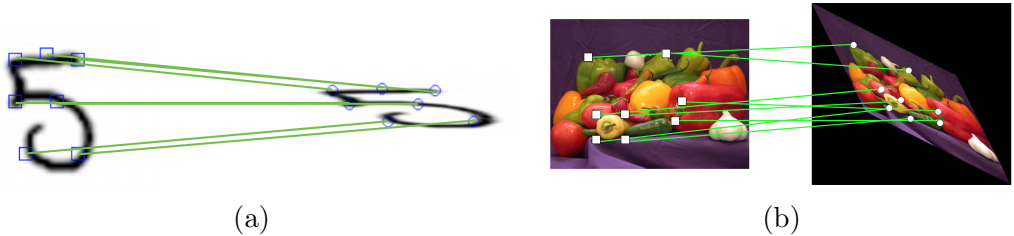

  \begin{center}
    \begin{tabular}{cc}
      \widgraph{0.45\textwidth}{5image} & 
      \widgraph{0.40\textwidth}{peppers} \\
      (a) & (b)
    \end{tabular}
    \caption{Synthetic experiment illustrating exact pose and
      correspondence estimation by the {\sf LevSort} algorithm for a
      transformed ``5'' in panel (a), and a transformed fruit picture
      in panel (b).  In each panel, the right images are obtained via
      a linear tranformation of the coordinates of the respective left
      images, and keypoints are generated according to the noiseless
      model~\eqref{obsmodel}; keypoints are the same in the right and
      left image. \vspace{-8mm}} \label{fig:algo}
    \end{center}
\end{figure}


\subsection{Extensions to outliers}

The results of Sections~\ref{sec:minimax} and~\ref{sec:polytime} also
hold in a somewhat general setting, where the set of perturbations to
the rows of the matrix $A$ is allowed to be larger than just the set
of permutation matrices $\PERMN$. In particular, defining the set of
``clustering matrices" $\mathcal{C}_n$ as
\begin{align}
\mathcal{C}_n = \{D \in \{0,1\}^{n \times n}\; \, \mid \, D {\bf 1} =
        {\bf 1}\}, \nonumber
\end{align}
we consider an observation model of the form
\begin{align}
Y = D^* A X^* + W, \label{cluster}
\end{align}
where the matrices $A$, $X^*$, and $W$ are as before, and $D^* \in
\mathcal{C}_n$ now represents a clustering matrix. Such a clustering
condition ensures stochasticity of the matrix $D^*$ (not double
stochasticity, as in the permutation model), and corresponds to the
case where multiple responses may come from the same covariate, and
some of the data may be permuted. Such a model is likely to better fit
data from image correspondence problems when the keypoints detected in
the two images are quite different. Also, such a formulation is
loosely related to the $k$-means clustering problem with Gaussian
data~\cite{awasthi2015relax}.

As it turns out, Theorems \ref{thm:MLrate}, \ref{thm:svt} and
\ref{thm:srlasso} also hold for this model, with minor modifications
to the proofs. Defining the analogous MLE for this model as
\begin{align}
\left(\widehat{D}_{\mathsf{ML}}, \Xhatml \right) = \arg
\min_{\substack{D \in \mathcal{C}_n \\ X \in \real^{d \times m}}}
\frobnorm{ Y - D A X }^2, \nonumber
\end{align}
we have the following theorem.

\begin{theorem} \label{thm:clustering}
\begin{enumerate}
\item[(a)] For any matrix $A$, and for all parameters $X^* \in
  \real^{d \times m}$ and $D^* \in \mathcal{C}_n$, we have
\begin{align}
\frac{ \frobnorm{ \widehat{D}_{\mathsf{ML}} A \Xhatml - D^* A X^*
  }^2}{nm} & \leq c_1 \sigma^2 \left( \frac{\rank(A)}{n} + \frac{1}{m}
\min\left\{\log n, m \right\} \right), \nonumber
\end{align}
with probability greater than $1 - e^{-c(n\log n + m \rank(A))}$.
\item[(b)] For any choice of parameters $D^*$ and $X^*$, the SVT
  estimate with $\lambda = 1.1 \sigma (\sqrt{n} + \sqrt{m})$ satisfies
\begin{align}
\frac{1}{nm} \frobnorm{ T_{\lambda} (Y) - D^* A X^* }^2 \leq c_1
\sigma^2 \rank(A) \left( \frac{1}{n} + \frac{1}{m} \right) \nonumber
\end{align}
with probability greater than $1 - e^{-cnm}$.
\item[(c)] For any choice of parameters $D^*$ and $X^*$, the
  square-root LASSO estimate~\eqref{eq:srlasso} with $\lambda = 2.1
  \left( \frac{1}{\sqrt{n}} + \frac{1}{\sqrt{m}} \right)$ satisfies
\begin{align}
\frac{1}{n m} \frobnorm{ \Yhat_{\mathsf{sr}}(\lambda) - D^* A X^* }^2
\leq c_1 \sigma^2 \rank(A) \left( \frac{1}{n} + \frac{1}{m} \right)
\nonumber
\end{align}
\end{enumerate}
with probability greater than $1 - 2e^{-cnm}$.
\end{theorem}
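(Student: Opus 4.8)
\noindent\emph{Proof plan.}
The plan is to show that all three parts reduce to their permutation-case counterparts---Theorems~\ref{thm:MLrate}(a), \ref{thm:svt}(a), and \ref{thm:srlasso}---with no essentially new argument, once two structural observations are recorded. First, for every $D^* \in \mathcal{C}_n$ we still have $\rank(D^* A) \leq \rank(A)$, so the signal matrix $D^* A X^*$ has rank at most $\rank(A)$, exactly as in the permutation model. Second, every $D \in \mathcal{C}_n$ has each of its rows equal to a standard basis vector (this is forced by $D \in \{0,1\}^{n \times n}$ and $D\mathbf{1} = \mathbf{1}$), so $|\mathcal{C}_n| = n^n$ and hence $\log|\mathcal{C}_n| = n\log n$, which is of the same order as $\log|\PERMN| = \log(n!) \leq n \log n$ used in the proof of Theorem~\ref{thm:MLrate}. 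These are precisely the two features of permutation matrices that the earlier proofs exploit.

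For part (a), I would revisit the argument behind~\eqref{upperbound}. Starting from the basic inequality $\frobnorm{Y - \widehat{D}_{\mathsf{ML}} A \Xhatml}^2 \leq \frobnorm{Y - D^* A X^*}^2 = \frobnorm{W}^2$ and expanding $Y = D^* A X^* + W$ gives $\frobnorm{\widehat{D}_{\mathsf{ML}} A \Xhatml - D^* A X^*}^2 \leq 2\,\inprod{W}{\widehat{D}_{\mathsf{ML}} A \Xhatml - D^* A X^*}$. The right-hand side is controlled by a two-part union bound: for each fixed value of the product $DA$ (of which there are at most $n^n$), the set $\{DAX : X \in \real^{d\times m}\}$ is a linear subspace of $\real^{n \times m}$ of dimension at most $m\,\rank(A)$, so the projection of $W$ onto the span of two such subspaces concentrates at scale $\sigma\sqrt{m\,\rank(A)}$; taking a union over the $n^n$ values of $DA$ contributes the $n\log n$ term in the exponent, while the trivial bound $\frobnorm{\Delta} \leq 2\frobnorm{W}$ (valid since $\inprod{W}{\Delta} \leq \frobnorm{W}\frobnorm{\Delta}$) yields the competing $\sigma^2$ rate, and the two combine into the factor $\tfrac{1}{m}\min\{\log n, m\}$. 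Since this reasoning uses only the cardinality and rank bounds above---never double stochasticity of $\Pi^*$---it transfers verbatim with $\PERMN$ replaced by $\mathcal{C}_n$, and the probability $1 - e^{-c(n\log n + m\rank(A))}$ is unaffected because $\log(n^n)$ and $\log(n!)$ agree up to constants.

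For parts (b) and (c), I would observe that the proofs of Theorem~\ref{thm:svt}(a) and Theorem~\ref{thm:srlasso} make no reference whatsoever to the permutation structure of the signal: they use only (i) the rank bound $\rank(\Pi^* A X^*) \leq \rank(A)$ and (ii) deviation bounds for the Gaussian noise matrix $W$ (notably $\opnorm{W} \leq \sigma(\sqrt{n} + \sqrt{m})$ up to constants, which drives the choice of $\lambda$). Both ingredients are identical in the clustering model after substituting $D^* A X^*$ for $\Pi^* A X^*$, with the very same $W$. Hence the SVT bound of part (b) and the square-root LASSO bound of part (c)---including the thresholds $\lambda = 1.1\sigma(\sqrt n + \sqrt m)$ and $\lambda = 2.1(1/\sqrt n + 1/\sqrt m)$, the condition $\rank(A)(1/n + 1/m) \leq 1/20$, and the stated success probabilities---carry over word for word.

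I expect no real obstacle here; the only point requiring care is to audit the proof of Theorem~\ref{thm:MLrate}(a) and confirm that double stochasticity of $\Pi^*$ is used nowhere beyond the counting bound $|\PERMN| \leq n^n$, and in particular that the lower-order $\min\{\log n, m\}$ refinement of the rate does not covertly rely on it---which, from the sketch above, it does not.
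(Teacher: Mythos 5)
Your proposal is correct and matches the paper's own (very brief) proof in substance: the paper likewise reduces all three parts to Theorems~\ref{thm:MLrate}, \ref{thm:svt}, and \ref{thm:srlasso}, noting for parts (b) and (c) only that $\rank(D^* A X^*) \leq \rank(A)$ with the same noise bounds, and for part (a) that the sole change is in the counting: the covering number of the observation space gets multiplied by $|\mathcal{C}_n| = n^n$ instead of $n!$, so the metric-entropy term $2n\log n$ in Lemma~\ref{lem:metent} is unchanged up to constants. The one place you diverge is in how you would re-derive the part (a) bound itself: the paper re-runs its localization machinery (star-shaped set, critical inequality, Dudley's entropy integral with the modified covering number), whereas you sketch a more elementary argument---for each of the $n^{2n}$ pairs of subspaces $\{DAX\}$ of dimension at most $m\,\rank(A)$, bound $\tracer{W}{\Deltahat}$ by the projection of $W$ onto their span, apply $\chi^2$ concentration, and union bound, then intersect with the crude bound $\frobnorm{\Deltahat} \leq 2\frobnorm{W}$ to get the $\min\{\log n, m\}$ refinement. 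Both routes hinge on exactly the same two quantities ($\log|\mathcal{C}_n| = n\log n$ and subspace dimension $m\,\rank(A)$) and give the same exponent $n\log n + m\,\rank(A)$; your version avoids the chaining/Gaussian-width apparatus at the cost of redoing the argument rather than citing it, while the paper's phrasing reuses the existing proof wholesale with a one-line change to the entropy bound. Either way, your audit conclusion is right: double stochasticity is never used, only cardinality and rank.
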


Clearly, the lower bounds~\eqref{eq:minimaxlb} and~\eqref{eq:svtlb}
hold immediately for the model~\eqref{cluster} as a result of the
inclusion $\mathcal{P}_n \subset \mathcal{C}_n$.


\section{Proofs}
\label{sec:proofs}

This section contains proofs of all our main results. We use $C, c,
c'$ to denote absolute constants that may change from line to line. We
let $\sigma_i(M)$ denote the $i$th largest singular value of a matrix
$M$.


\subsection{Proof of Theorem~\ref{thm:MLrate}}

We split the proof into two natural parts, corresponding to the upper
and lower bounds, respectively.  The upper bound boils down to
analyzing the Gaussian width~\cite{pisier1999volume} of a certain set,
which we obtain via Dudley's entropy integral~\cite{dudley} and bounds
on the metric entropy of the observation space. The lower bound is
obtained via a packing construction and an application of Fano's
inequality.

\subsubsection{Proof of upper bound}
\label{sec:thm1ub}

Writing $Y^* = \Pi^* A X^*$ and $\widehat{Y} = \Pihatml A
\widehat{X}_{{\sf ML}}$, we have by the optimality of $\Yhat$ for
problem~\eqref{eq:ml} that $\frobnorm{Y - \Yhat }^2 \leq \frobnorm{Y -
  Y^* }^2$, from which it follows that the error matrix $\Deltahat =
\Yhat - Y^*$ satisfies the following \emph{basic inequality}:
\begin{align}
  \label{basic}
\frac{1}{2} \frobnorm{\widehat{\Delta}}^2 & \leq 
\tracer{\widehat{\Delta}}{W},
\end{align}
where $\tracer{A}{B}$ denotes the trace inner product between two
matrices $A$ and $B$. We prove inequality~\eqref{upperbound} by
proving the following claims.

\begin{subequations}
\begin{align}
&\Pr\left\{ \frac{\frobnorm{\Deltahat}^2}{nm} \geq 8 \sigma^2 \right\}
  \leq e^{-\frac{nm}{8}}, \text{ and} \label{in1} \\
& \Pr\left\{ \frac{\frobnorm{\Deltahat}^2}{nm} \geq c_2 \sigma^2
  \left( \frac{d}{n} + \frac{\log n}{m} \right) \right\} \leq
  e^{-c(n\log n + m \rank(A))}. \label{in2}
\end{align}
\end{subequations}

\paragraph{Proof of inequality~\eqref{in1}:}
Applying the Cauchy Schwarz inequality to the RHS of
inequality~\eqref{basic} yields
\begin{align}
  \label{csbound}
\frac{1}{2} \frobnorm{\Deltahat} & \leq \frobnorm{W}.
\end{align} 

Squaring both sides of inequality \eqref{csbound} and using standard
sub-exponential tail bounds~\cite{wainwrig2015high} yields inequality
\eqref{in1}.  \qed

\paragraph{Proof of inequality~\eqref{in2}:}

Without loss of generality, by rescaling as necessary, we may assume
that the noise $W$ has standard normal entries ($\sigma^2 = 1$). We
use $\mathbb{U}_m(A)$ to denote the set of matrices whose $m$ columns
lie in the range of $\Pi A$ for some permutation matrix $\Pi$, i.e.,
\begin{align}
\mathbb{U}_m(A) & = \{Y \in \real^{n \times m} \, \mid \, Y = \Pi A X
\text{ for some } \Pi \in \mathcal{P}_n, X \in \real^{d \times m}
\}. 
\end{align}
Also define the set
\begin{align*}
\mathbb{U}^{{\sf diff}}_m(A) & = \{Y \, \mid \, Y = Y_1 - Y_2 \text{
  for } Y_1, Y_2 \in \mathbb{U}_m(A)\},
\end{align*}
as well as the function
\begin{align*}
Z(t) & \defn \sup_{\substack{D \in \mathbb{U}^{{\sf diff}}_m(A)
    \\ \frobnorm{D} \leq t}} \tracer{D}{W}.
\end{align*}
Before proceeding with the proof, we state the definition of the
covering number of a set.
\begin{definition}[Covering number]
A $\delta$-cover of a set $\mathbb{T}$ with respect to a metric $\rho$
is a set $\left\{ \theta^1, \theta^2, \ldots, \theta^N\right\} \subset
\mathbb{T}$ such that for each $\theta \in \mathbb{T}$, there exists
some $i \in [N]$ such that $\rho(\theta, \theta_i) \leq \delta$. The
$\delta$-covering number $N(\delta, \mathbb{T}, \rho)$ is the
cardinality of the smallest $\delta$-cover.
\end{definition}
The logarithm of the covering number is referred to as the metric
entropy of a set.  The following lemma bounds the metric entropy of
the set $\udiff$. Let $\mathbb{B}_F(t)$ denote the Frobenius norm ball
of radius $t$ centered at $0$.
\begin{lemma} \label{lem:metent}
The metric entropy of the set $\udiff \cap \mathbb{B}_F(t)$ in the
Frobenius norm metric is bounded as
\begin{align}
\log N(\delta, \udiff \cap \mathbb{B}_F(t), \frobnorm{\cdot} ) & \leq
2 \rank(A) \cdot m \log \left( 1 + \frac{4t}{\delta} \right) + 2 n\log
n.
\end{align}
\end{lemma}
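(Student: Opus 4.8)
The plan is to combine a union bound over pairs of permutations with the standard volumetric covering bound for a Euclidean ball, the key point being that once both permutations are fixed, the corresponding slice of $\udiff$ lives in a low-dimensional linear subspace of $\real^{n \times m}$.

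First I would record the structural observation underlying the estimate. Any $D \in \udiff$ can be written as $D = \Pi_1 A X_1 - \Pi_2 A X_2$ for some $\Pi_1, \Pi_2 \in \mathcal{P}_n$ and $X_1, X_2 \in \real^{d \times m}$. Each column of $\Pi_j A X_j$ is a linear combination of the columns of $\Pi_j A$, hence lies in $\range(\Pi_j A)$, which has dimension $\rank(\Pi_j A) = \rank(A)$ since $\Pi_j$ is invertible. Therefore every column of $D$ lies in the subspace $V_{\Pi_1, \Pi_2} \defn \range(\Pi_1 A) + \range(\Pi_2 A)$, which has dimension at most $2\rank(A)$. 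Viewing $\real^{n \times m}$ as a Euclidean space under the trace inner product, it follows that $D$ lies in the linear subspace $\mathcal{V}_{\Pi_1,\Pi_2}$ of matrices all of whose $m$ columns lie in $V_{\Pi_1,\Pi_2}$, and $\dim \mathcal{V}_{\Pi_1,\Pi_2} \leq 2 \rank(A) \cdot m \eqdef r$.

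Next, for each fixed pair $(\Pi_1, \Pi_2)$, the portion of $\udiff \cap \mathbb{B}_F(t)$ generated by that pair is contained in $\mathcal{V}_{\Pi_1,\Pi_2} \cap \mathbb{B}_F(t)$, i.e., in a ball of radius $t$ in a Euclidean space of dimension at most $r$. The usual volume-comparison argument gives a $(\delta/2)$-cover of this ball of cardinality at most $(1 + 2t/(\delta/2))^r = (1 + 4t/\delta)^r$. To make the cover elements lie in $\udiff$ itself, as required by the definition of covering number, I would apply the routine ``snap to the set'' step: discard any cover point that is not within $\delta/2$ of the relevant piece of $\udiff$, and replace each surviving point by a genuine element of $\udiff$ within $\delta/2$ of it; by the triangle inequality this yields a $\delta$-cover of that piece, of cardinality still at most $(1 + 4t/\delta)^r$. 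Finally, taking a union over all pairs of permutations — there are $(n!)^2 \leq n^{2n}$ of them — and concatenating the per-pair covers yields a $\delta$-cover of $\udiff \cap \mathbb{B}_F(t)$ of size at most $n^{2n} (1 + 4t/\delta)^r$. Taking logarithms and substituting $r \leq 2\rank(A) m$ gives exactly
\begin{align*}
\log N(\delta, \udiff \cap \mathbb{B}_F(t), \frobnorm{\cdot}) \leq 2n \log n + 2\rank(A)\, m \log\!\left(1 + \frac{4t}{\delta}\right).
\end{align*}

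There is no deep obstacle here; this is a textbook ``parametric plus volumetric'' metric-entropy estimate. The only points that need care are the subspace dimension count — making sure it comes out as $2\rank(A) m$ rather than something scaling with $\min\{n,d\}$ or $m^2$ — and the minor bookkeeping needed to keep the cover points inside the set, which is what produces the constant $4$ instead of $2$ inside the logarithm. The union over the $(n!)^2$ permutation pairs is responsible for the additive $2n\log n$ term.
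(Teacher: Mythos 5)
Your proof is correct and follows essentially the same template as the paper's: a union bound over permutations combined with a volumetric covering bound for a low-dimensional ball. The only real difference is the order of operations. The paper first covers $\mathbb{U}_m(A)\cap\mathbb{B}_F(t)$ at scale $\delta/2$ (a union over $n!$ permutations of $m\rank(A)$-dimensional subspaces) and then passes to $\udiff$ by taking pairwise differences of cover points; you instead cover $\udiff\cap\mathbb{B}_F(t)$ directly, observing that for a fixed pair $(\Pi_1,\Pi_2)$ the corresponding slice lies in the at most $2m\rank(A)$-dimensional subspace of matrices whose columns lie in $\range(\Pi_1 A)+\range(\Pi_2 A)$. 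Your route is arguably a touch cleaner, since it does not implicitly assume that every $D=Y_1-Y_2\in\udiff\cap\mathbb{B}_F(t)$ admits a representation with both $Y_1$ and $Y_2$ in $\mathbb{B}_F(t)$.

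One minor remark: the ``snap to the set'' step is superfluous here. The slice you identify for a fixed $(\Pi_1,\Pi_2)$ is \emph{exactly} the ball of radius $t$ in the linear subspace $\mathcal{V}_{\Pi_1,\Pi_2}$, and the standard volumetric argument already produces a $\delta$-cover of that ball consisting of points inside the ball, hence inside $\udiff$. So you could directly take $(1+2t/\delta)^{2m\rank(A)}$ per pair without doubling. Both your factor of $4$ (from covering at scale $\delta/2$ and then snapping) and the paper's (from covering $\mathbb{U}_m(A)$ at scale $\delta/2$ and squaring) are artifacts of bookkeeping; the constant inside the logarithm is immaterial to the theorem, so the proof as you wrote it is perfectly fine.
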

\noindent We prove the lemma at the end of the section, taking it as
given for the proof of inequality~\eqref{in2}.

\begin{proof}[Proof of inequality~\eqref{in2}]
By definition of $Z(t)$, is easy to see that we have
\begin{align*}
\frac{1}{2} \frobnorm{\widehat{\Delta}}^2 \leq Z\big(
\frobnorm{\Deltahat} \big).
\end{align*}
One can also verify that the set $\udiff$ is star-shaped\footnote{A
  set $S$ is said to be star-shaped if $t \in S$ implies that $\alpha
  t \in S \text{ for all } \alpha \in [0,1]$}, and so the following
critical inequality holds for some $\deltanm > 0$:
\begin{align}
\EE \left[ Z(\deltanm) \right] \leq \frac{\deltanm^2}{2}. \label{crit}
\end{align}
We are interested in the smallest (strictly) positive solution to
inequality \eqref{crit}. Moreover, we would like to show that for
every $t \geq \deltanm$, we have $\frobnorm{\Deltahat}\leq c \sqrt{t
  \deltanm}$ with probability greater than $1 - c e^{-c' t \deltanm}$.

Define the ``bad" event
\begin{align}
\mathcal{E}_t & \defn \Big\{ \exists \Delta \in \udiff \mid
\frobnorm{\Delta} \geq \sqrt{t \deltanm} \text{ and }
\tracer{\Delta}{W} \geq 2 \frobnorm{\Delta} \sqrt{t \deltanm} \Big\}.
\end{align}
Using the star-shaped property of $\udiff$, it follows by a rescaling
argument that
\begin{align*}
\Pr[ \mathcal{E}_t ] \leq \Pr[ Z(\deltanm) \geq 2\deltanm \sqrt{t
    \deltanm} ] \text{ for all } t \geq \deltanm.
\end{align*}
The entries of $W$ are i.i.d. standard Gaussian, and the function $W \mapsto Z(t)$ is
convex and Lipschitz with parameter $t$. Consequently, by Borell's
theorem~(see, for example, Milman and Schechtman \cite{milman1986asymptotic} for a simple proof), the following holds for all
$t \geq \deltanm$:
\begin{align*}
\Pr [ Z(\deltanm) \geq \EE [Z(\deltanm)] + \deltanm \sqrt{t\deltanm} ]
\leq 2 e^{-c t\deltanm}. 
\end{align*}
By the definition of $\deltanm$, we have $\EE [Z(\deltanm )] \leq
\deltanm^2 \leq \deltanm \sqrt{t \deltanm}$ for any $t \geq \deltanm$,
and consequently, for all $t \geq \deltanm$, we have
\begin{align*}
\Pr[\mathcal{E}_t] \leq \Pr[Z(\deltanm) \geq 2\deltanm \sqrt{
    t\deltanm} ] \leq 2e^{-ct\deltanm}.
\end{align*}
Now either $\frobnorm{\Delta} \leq \sqrt{t\deltanm}$, or we have
$\|\Delta\|_F > \sqrt{t\deltanm}$. In the latter case, conditioning on
the complementary event $\mathcal{E}^c_t$, our basic inequality
implies that $\frac{1}{2} \frobnorm{\Delta}^2 \leq 2 \frobnorm{\Delta}
\sqrt{t \deltanm}$. Consequently, we have
\begin{align*}
\Pr\left\{ \frobnorm{\Delta} > 4\sqrt{t\deltanm}\right\} & \leq
\Pr\left\{ \frobnorm{\Delta} > 4\sqrt{t\deltanm} | \mathcal{E}^c_t
\right\} + \Pr\{ \mathcal{E}_t\} \\
& \leq 2e^{-ct\deltanm}. 
\end{align*}
Putting together the pieces yields
\begin{align}
\frobnorm{\Delta} \leq c_0 \sqrt{t \deltanm} \nonumber
\end{align}
with probability at least $1 - 2e^{-ct\deltanm}$ for every $t \geq
\deltanm$.

In order to determine a feasible $\deltanm$ satisfying the critical
inequality~\eqref{crit}, we need to bound the expectation
$\EE[Z(\deltanm)]$.  We now use Dudley's entropy integral ~\cite{dudley} to bound
$\EE[Z(t)]$. In particular, for a universal constant $C$, we have
\begin{align*}
\frac{1}{C}\EE\left[ Z(t) \right] &\leq \int_0^t \sqrt{ \log N(\delta,
  \udiff \cap \mathbb{B}_F(t), \frobnorm{\cdot} )} d\delta \\
& \stackrel{\1}{\leq} t \sqrt{n \log n} + \sqrt{m\rank(A)} \int_0^t
\sqrt{\log \left( 1+ \frac{2t}{\delta} \right) }d\delta \\
& \stackrel{\2}{=} t \sqrt{n \log n} + t \sqrt{m\rank(A)}
\int_0^1 \sqrt{\log \left( 1+ \frac{2}{u} \right) }du \\
& \leq t \sqrt{n \log n} + c t \sqrt{m \rank(A)},
\end{align*}
where in step $\1$, we have made use of Lemma~\ref{lem:metent}, and in
step $\2$, we have used the change of variables $u = \delta/t$. Now
comparing with the critical inequality, we see that $$\delta_n \leq c
\left(\sqrt{n\log n} + \sqrt{m \rank(A)}\right).$$ Putting together
the pieces then proves claim~\eqref{in2}.
\end{proof}

\noindent It remains to prove Lemma~\ref{lem:metent}.

\paragraph{Proof of Lemma~\ref{lem:metent}.}
We begin by finding the $\delta$-covering number of
\begin{align}
\mathbb{U}_m^\Pi(A) = \{ Y \in \real^{n \times m} \mid Y = \Pi A X
\text{ for some } X \in \real^{d \times m}\}.
\end{align}
Note that $\mathbb{U}_m^\Pi$ is isomorphic to $\range(I_m \otimes \Pi
A)$, where $\otimes$ denotes the tensor product. Note that $\range(I_m
\otimes \Pi A)$ is a linear subspace of dimension $m \cdot
\rank(A)$. Also, since the set \mbox{$\range(I_m \otimes \Pi A) \cap
  \mathbb{B}_2^{nm} (t)$} is an $m \cdot \rank(A)$-dimensional
$\ell_2$-ball of radius $t$, we have by a volume ratio argument that
\begin{align*}
  N (\delta, \mathbb{U}_m^\Pi(A) \cap \mathbb{B}_F(t),
  \frobnorm{\cdot} ) \leq \left(1 + \frac{2t}{\delta} \right)^{m
    \rank(A)}.
\end{align*}

By definition, we also have $\mathbb{U}_m (A) = \bigcup_{\Pi \in
  \mathcal{P}_n}\mathbb{U}_m^\Pi(A)$, and so by the union bound, we
have
\begin{align*}
N(\delta, \mathbb{U}_m (A) \cap \mathbb{B}_F(t), \frobnorm{\cdot} )
\leq n! \left(1 + \frac{2t}{\delta} \right)^{m \rank(A)}.
\end{align*}
In order to complete the proof, we notice that
\begin{align*}
N(\delta, \udiff \cap \mathbb{B}_F(t), \frobnorm{\cdot} ) &\leq \left[
  N(\delta/2, \mathbb{U}_m (A) \cap \mathbb{B}_F(t), \frobnorm{\cdot})
  \right]^2,
\end{align*}
since it is sufficient to use two $\delta/2$-covers of the set
$\mathbb{U}_m (A) \cap \mathbb{B}_F(t)$ in conjunction in order to
obtain a $\delta$-cover of the set $\udiff \cap \mathbb{B}_F(t)$.
\qed

\subsubsection{Proof of lower bound}

As alluded to before, the bound follows from a packing set
construction and Fano's inequality, which is a standard template used
to prove minimax lower bounds. Suppose we wish to estimate a parameter
$\theta$ over an indexed class of distributions $\mathcal{P} =
\{\mathbb{P}_\theta \; \mid\; \theta \in \Theta \}$ in the square of a
(pseudo-)metric $\rho$. We refer to a subset of parameters
$\{\theta^1, \theta^2, \ldots, \theta^M \}$ as a local $(\delta,
\epsilon)$-packing set if
\begin{align*}
\min_{i,j \in [M], i\neq j} \rho(\theta^i, \theta^j) \geq \delta
\qquad \text{ and } \qquad \frac{1}{\binom{M}{2}} \sum_{i, j \in [M]}
D(\mathbb{P}_{\theta^i} \| \mathbb{P}_{\theta^j}) \leq \epsilon.
\end{align*}
Note that this set is a $\delta$-packing in the $\rho$ metric with the
average KL-divergence bounded by $\epsilon$.  The following result is
a straightforward consequence of Fano's inequality:
\begin{lemma}[Local packing Fano lower bound]
  \label{fanolbprime}
For any $(\delta, \epsilon)$-packing set of cardinality $M$, we have
\begin{align}
\inf_{\thetahat} \sup_{\thetastar \in \Theta} \EE \left[
  \rho(\thetahat, \thetastar)^2\right] \geq \frac{\delta^2}{2} \left(1
- \frac{\epsilon + \log 2}{\log M} \right).
\end{align}
\end{lemma}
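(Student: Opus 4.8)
The plan is to run the classical reduction from squared-metric estimation to a multiway hypothesis test, and then invoke Fano's inequality together with a convexity bound on the mutual information. Fix a $(\delta,\epsilon)$-packing set $\{\theta^1,\dots,\theta^M\}\subseteq\Theta$, let $J$ be uniform on $[M]$, and write $Z$ for data generated from $\mathbb{P}_{\theta^J}$. Given an arbitrary estimator $\thetahat$, the first step is to introduce the nearest-codeword test $\psi(Z)\in\arg\min_{j\in[M]}\rho\big(\thetahat(Z),\theta^j\big)$, with ties broken arbitrarily. The key geometric observation is that whenever $\psi(Z)=j$ for some $j\neq i$ one has $\rho(\thetahat(Z),\theta^j)\le\rho(\thetahat(Z),\theta^i)$, so the triangle inequality together with the $\delta$-separation of the packing set forces $\rho(\thetahat(Z),\theta^i)\ge\delta/2$; that is, the test can err only when $\thetahat$ lands at least $\delta/2$ from the truth. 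Markov's inequality, applied conditionally on $\thetastar=\theta^i$, then gives
\begin{align*}
\EE_{\theta^i}\!\big[\rho(\thetahat,\theta^i)^2\big]\;\ge\;\frac{\delta^2}{4}\,\Pr_{\theta^i}\{\psi(Z)\neq i\}.
\end{align*}

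Next I would pass to the supremum over $\Theta$, which dominates the uniform average over the $M$ packing indices since the packing set is contained in $\Theta$, and lower-bound that average testing error via Fano's inequality for the uniform prior $J$:
\begin{align*}
\sup_{\thetastar\in\Theta}\EE\big[\rho(\thetahat,\thetastar)^2\big]\;\ge\;\frac{\delta^2}{4}\cdot\frac1M\sum_{i=1}^M\Pr_{\theta^i}\{\psi(Z)\neq i\}\;\ge\;\frac{\delta^2}{4}\left(1-\frac{I(J;Z)+\log 2}{\log M}\right).
\end{align*}
It remains to control $I(J;Z)$. Writing $\bar{\mathbb{P}}=\frac1M\sum_j\mathbb{P}_{\theta^j}$ for the induced mixture, one has $I(J;Z)=\frac1M\sum_i D\big(\mathbb{P}_{\theta^i}\,\|\,\bar{\mathbb{P}}\big)$, and convexity of $D(\,\cdot\,\|\,\cdot\,)$ in its second argument gives $D\big(\mathbb{P}_{\theta^i}\,\|\,\bar{\mathbb{P}}\big)\le\frac1M\sum_j D\big(\mathbb{P}_{\theta^i}\,\|\,\mathbb{P}_{\theta^j}\big)$; combining with the packing-set hypothesis $\frac{1}{\binom{M}{2}}\sum_{i,j\in[M]}D\big(\mathbb{P}_{\theta^i}\,\|\,\mathbb{P}_{\theta^j}\big)\le\epsilon$ yields $I(J;Z)\le\epsilon$. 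Substituting this in gives the asserted bound (the constant $1/2$ versus $1/4$ is immaterial and depends only on the exact decoding threshold used).

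The argument involves no genuinely hard step: it is a textbook template that makes no use of the structure of the model~\eqref{obsmodel}. The only points needing care are that the nearest-codeword test be well defined (ties resolved measurably), that the triangle-inequality step be executed correctly, and the routine convexity bookkeeping that replaces the mutual information by an average pairwise KL divergence. All the problem-specific difficulty is deferred to the companion construction of an explicit $(\delta,\epsilon)$-packing inside a sufficiently non-flat matrix class, which is what ultimately produces the lower bound~\eqref{eq:minimaxlb}.
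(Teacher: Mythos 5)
Your argument is the standard local-packing Fano reduction that the paper itself relies on without writing out (the lemma is simply declared a straightforward consequence of Fano's inequality), and every step---the nearest-codeword test, the triangle-inequality/Markov reduction, and the convexity bound $I(J;Z)\le\epsilon$ via the average pairwise KL divergence---is correct. The only discrepancy is the constant: your reduction gives $\delta^2/4$ rather than the stated $\delta^2/2$, which, as you note, is immaterial to how the lemma is applied since all constants are absorbed into $c_2$ in the lower bound~\eqref{eq:minimaxlb}.
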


The remainder of argument is directed to establishing the following
two claims:
\begin{subequations}
\begin{align}
\sup_{X^* \in \real^{d \times m}} \EE \frac{\frobnorm{\Pihat A \Xhat -
    \Pi^* A X^*}^2}{nm} \geq c \sigma^2 \frac{\rank(A)}{n} \text{ for
  all } A, \text{ and}\label{lb1prime} \\
\sup_{X^* \in \real^{d \times m}} \EE \frac{\frobnorm{\Pihat A \Xhat -
    \Pi^* A X^*}^2}{nm} \geq c' \sigma^2 \frac{1}{m} \text{ if } A \in
\mathcal{A}(C_1, C_2/\sqrt{n}). \label{lb2prime}
\end{align}
\end{subequations}
It is easy to see that both claims together prove the lemma.


\paragraph{Proof of claim~\eqref{lb1prime}:}

This claim is consequence of classical minimax bounds on linear
regression. Since we are operating in the matrix setting, we include
the proof for completeness.

The proof involves the construction of a packing set $\{\Pi A X_i
\}_{i=1}^M$ such that for all \hbox{$i \neq j \in [M]$}, we have
$\frac{\frobnorm{\Pi A X_i}}{\sqrt{nm}} \leq 4\delta$ and
$\frac{\frobnorm{\Pi A X_i - \Pi A X_j}}{\sqrt{nm}} \geq
\delta$. Since we are effectively packing the space
$\frac{1}{\sqrt{nm}}\range(I_m \otimes \Pi A)$, standard results show
that there exists such a packing of this space with $\log M \geq
\rank(I_m \otimes \Pi A) \log 2$.

Also note that with the underlying parameter $X_i$, our observations
have the distribution $\mathbb{P}_i = \NORMAL(\Pi A X_i, \sigma^2
I_{nm})$. Hence, the KL divergence between two observations $i$ and
$j$ is simply
\begin{align*}
D(\mathbb{P}_i \| \mathbb{P}_j) = \frac{1}{\sigma^2} \| \Pi A X_i -
\Pi A X_j \|_F^2 \leq \frac{32 \delta^2 nm}{\sigma^2}.
\end{align*}
Substituting this into the bound of Lemma~\ref{fanolbprime} with
$\rho(\theta_1, \theta_2) = \| \theta_1 - \theta_2 \|_F$, we have
\begin{align*}
\mathcal{M} \geq \frac{\delta^2}{2} \left(1 - \frac{ \frac{32 \delta^2
    nm}{\sigma^2} + \log 2}{m \rank(A) \log 2} \right),
\end{align*}
where we have again used $\mathcal{M}$ to denote the minimax rate of
prediction.

Setting $\delta^2 = c\frac{\sigma^2 \rank(A)}{n}$ completes the proof
of claim~\eqref{lb1prime}. Note that the proof of this claim did not
require the assumption that $A \in \mathcal{A}(C_1, C_2/ \sqrt{n})$.


\paragraph{Proof of claim~\eqref{lb2prime}}

For ease of exposition, we first prove claim~\eqref{lb2prime} for matrices in a smaller class
than $\mathcal{A}(C_1, C_2/ \sqrt{n})$.  We let $\mathbf{1}^{p}_n$ denote the $n$-dimensional
vector having $1$ in its first $p$ coordinates and $0$ in the
remaining coordinates.

Now consider the class of matrices that have $\mathbf{1}^{p}_n$ in
their range. By multiplying with $\delta$ and stacking $m$ of these
vectors up as columns, we have a matrix $\widetilde{Y}^1 \in \real^{n
  \times m}$ whose first $p$ rows are identically $\delta$ and the
rest are identically zero. Define the Hamming distance between two
binary vectors $\dH(u, v) = \#\{i: u_i \neq v_i\}.$ We require the
following lemma.
\begin{lemma}
  \label{lem:packingprime}
There exists a set of binary $n$-vectors $\{v_i\}_{i=1}^M$, each of
Hamming weight $p$ and satisfying $\dH(v_i, v_j) \geq h$, having
cardinality $M = \frac{\binom{n}{p}}{\sum_{i = 1}^{\lfloor
    \frac{h-1}{2} \rfloor} \binom{n - p}{i} \binom{p}{i}}.$
\end{lemma}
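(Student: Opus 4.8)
The plan is to prove this as an instance of the Gilbert--Varshamov bound, obtained by a greedy maximal-packing argument on the set of weight-$p$ binary $n$-vectors (the Johnson scheme).

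First I would record two elementary facts about Hamming distance restricted to weight-$p$ vectors. If $u$ and $v$ have weight $p$ with supports $S_u$ and $S_v$, then $\dH(u,v) = |S_u \triangle S_v| = 2\big(p - |S_u \cap S_v|\big)$, so every pairwise distance is even; consequently $\dH(u,v) \le h-1$ holds if and only if $\dH(u,v) \le 2\lfloor (h-1)/2 \rfloor$. Moreover, for a fixed weight-$p$ vector $u$ and each $i \ge 1$, the number of weight-$p$ vectors at distance exactly $2i$ from $u$ is $\binom{p}{i}\binom{n-p}{i}$, since such a vector is produced by deleting $i$ coordinates from $S_u$ and adjoining $i$ coordinates from its complement. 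Hence the number of weight-$p$ vectors lying within Hamming distance $h-1$ of $u$, counting $u$ itself, equals $1+B$, where $B \defn \sum_{i=1}^{\lfloor (h-1)/2 \rfloor} \binom{p}{i}\binom{n-p}{i}$.

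Next I would construct the code greedily: pick an arbitrary weight-$p$ vector $v_1$, and repeatedly adjoin any weight-$p$ vector whose Hamming distance to all previously chosen vectors is at least $h$, halting when no such vector remains; call the resulting list $\{v_1,\dots,v_M\}$. By construction each $v_j$ has weight $p$ and $\dH(v_i,v_j)\ge h$ for $i\neq j$. By maximality, every weight-$p$ vector $w$ must lie within distance $h-1$ of some $v_j$, for otherwise $w$ could be adjoined, contradicting maximality. Therefore the $M$ Hamming balls of radius $h-1$ centered at the $v_j$ cover all $\binom{n}{p}$ weight-$p$ vectors, and since each such ball contains at most $1+B$ of them, we obtain $M(1+B) \ge \binom{n}{p}$, that is, $M \ge \binom{n}{p}\big/\big(1 + \sum_{i=1}^{\lfloor (h-1)/2 \rfloor}\binom{p}{i}\binom{n-p}{i}\big)$. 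This matches the stated cardinality up to the immaterial additive term in the denominator (equivalently, up to a factor of two, since $B\ge 1$ whenever $h\ge 3$), and only the resulting lower bound on $\log M$ is used in the Fano computation that follows.

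There is no substantive obstacle here — it is the classical constant-weight Gilbert--Varshamov bound — and the only steps requiring any care are the parity observation, which is exactly what makes $\lfloor (h-1)/2 \rfloor$ the correct range of summation, and the ball-size count $\binom{p}{i}\binom{n-p}{i}$. An entirely equivalent route draws $M$ weight-$p$ vectors independently and uniformly at random and expurgates one vector from each colliding pair, yielding the same bound up to constants.
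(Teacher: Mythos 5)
Your proof is correct and follows essentially the same route as the paper: a Gilbert--Varshamov-type volume argument on the weight-$p$ slice, using the ball-size count $\sum_{i=1}^{\lfloor (h-1)/2\rfloor}\binom{p}{i}\binom{n-p}{i}$ (the paper phrases the greedy step as an independent set in a $\Delta$-regular graph, which is the same argument). Your version is in fact slightly more careful, spelling out the parity observation behind the summation range and the harmless $+1$ in the denominator, neither of which affects the $\log M$ bound used downstream.
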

\noindent The lemma is proved at the end of this section.

\paragraph{Proof of claim~\eqref{lb2prime}}

Applying Lemma~\ref{lem:packingprime} and a rescaling argument, we see
that there is a packing set $\{ \Pi_i \widetilde{Y}^1 \}_{i=1}^M$ such
that
\begin{subequations}
\begin{align}
\frac{1}{\sqrt{nm}} \frobnorm{\Pi_i \widetilde{Y}^1} &= \delta
\sqrt{\frac{p}{n}}\text{ for } i \in [M], \text{ and } \\
\frac{1}{\sqrt{nm}} \frobnorm{\Pi_i \widetilde{Y}^1 - \Pi_j
  \widetilde{Y}^1} & \geq  \delta \sqrt{\frac{h}{n}} \text{ for } i
\neq j \in [M]. 
\end{align}
\end{subequations}
Fixing some constant $\gamma \in (0,1)$ and choosing $p = \gamma n$
and $h = \frac{n}{2}\min\left\{ \gamma/2, (1-\gamma)/2\right\}$, it
can be verified that we obtain a packing set of size $M \geq e^{\gamma
  \log(1/\gamma) n}$.  We now have observation $i$ distributed as
$\mathbb{P}_i = \NORMAL(\Pi_i \widetilde{Y}^1, \sigma^2 I_{nm})$, and
so
\begin{align*}
D(\mathbb{P}_i \| \mathbb{P}_j) = \frac{1}{\sigma^2} \frobnorm{ \Pi_i
  \widetilde{Y}^1 - \Pi_j \widetilde{Y}^1 }^2 \leq c \frac{\delta^2
  \gamma nm}{\sigma^2}.
\end{align*}

Finally, substituting into the Fano bound of Lemma~\ref{fanolbprime}
yields
\begin{align*}
\inf_{\substack{\Pihat \in \mathcal{P}_n \\ \Xhat \in \real^{d \times
      m}}} \sup_{\substack{\Pi^* \in \mathcal{P}_n \\ X^* \in \real^{d
      \times m}}} \EE \left[ \frac{1}{nm} \frobnorm{\Pihat A \Xhat -
    \Pi^* A X^*}^2 \right] \geq \frac{\delta^2}{2} \left(1 - \frac{
  \frac{c \delta^2 \gamma nm}{\sigma^2} + \log 2}{\gamma
  \log(1/\gamma) n} \right).
\end{align*}
Setting $\delta^2 = c(\gamma)\frac{\sigma^2}{m}$ for a constant
$c(\gamma)$ depending only on $\gamma$ completes the proof provided
the vector $\mathbf{1}^p_n \in { \sf range} (A)$ for $p = \gamma n$
with $\gamma \in (0,1)$.

It remains to extend the proof to matrices in the class
$\mathcal{A}(C_1, C_2/\sqrt{n})$, and to prove
Lemma~\ref{lem:packingprime}.

By definition, if $A \in \mathcal{A}(C_1, C_2/\sqrt{n})$, then there
exists a vector $a \in \range(A) \cap \mathbb{B}_2(1)$ such that
$a^s_{C_1 n} \geq a^s_{C_1 n+1} + C_2/\sqrt{n}$. We may assume that
$\|a\|_2 = 1$ by a rescaling argument, and also that $a = a^s$. By
definition, we have
\begin{align}
\left(a_i - a_j\right)^2 \geq C_2^2 / n \text{ for all } i \leq
\lfloor C_1 n \rfloor \text{ and } j \geq \lfloor C_1 n \rfloor +
1. \label{eq:sep}
\end{align}

It can also be verified that since $\|a\|_2 = 1$, we must have $C_2
\leq 2$. For the rest of the proof, we assume for simplicity of
exposition that $C_1 n$ is an integer. Fixing the value
\mbox{$\epsilon = \frac{n}{2} \min(C_1, 1 - C_1)$,} consider the
$\epsilon$-packing generated by permutations $\left\{ \Pi_i
\right\}_{i=1}^M$ of the vector $\mathbf{1}^{C_1 n}_n$, given by
Lemma~\ref{lem:packingprime} by taking $v_i = \Pi_i \mathbf{1}^{C_1
  n}_n$. Using these permutations, we observe that
\begin{align*}
\|\Pi_i a - \Pi_j a \|_2 &\geq \sqrt{\epsilon} \frac{C_2}{\sqrt{n}}
\geq c,
\end{align*}
where $c$ depends on the constants $(C_1, C_2)$, and we have used
condition \eqref{eq:sep} along with the fact that $\dH(v_i, v_j) \geq
\epsilon$.

Following similar steps to before then proves lemma for all matrices
$A \in \mathcal{A} (C_1, C_2/\sqrt{n})$.

\noindent It remains to prove Lemma~\ref{lem:packingprime}.

\paragraph{Proof of Lemma~\ref{lem:packingprime}}

The proof follows by a volume ratio argument that underlies the proof
of the Gilbert-Varshamov bound. In particular, the number of permuted
vectors of $\mathbf{1}_n^p$ that are within a Hamming distance $h -1$
of $\mathbf{1}_n^p$ is given by $\Delta = \sum_{i = 1}^{\lfloor
  \frac{h-1}{2} \rfloor} \binom{n - p}{i} \binom{p}{p - i}$. Now form
a graph with all $\binom{n}{p}$ permuted vectors of $\mathbf{1}_n^p$
as vertices and connect two vertices if the corresponding vectors have
Hamming distance less than $h$. Then such a graph has uniform degree
$\Delta$ and therefore contains an independent set of size
$\frac{\binom{n}{p}}{\Delta}$. \qed


\subsection{Proof of Theorem~\ref{thm:svt}}

Again, we divide our proof into two parts, corresponding to the upper
and lower bounds respectively.

\subsubsection{Proof of upper bound}
\label{sec:thm2ub}

For this proof, we use the shorthand $Y^* = \Pi^* A X^*$. Also fix
$\delta = 0.1$, and let $s$ be the number of singular values of $Y^*$
greater than $\frac{\delta}{1 + \delta} \lambda$. Also, let $Y^*_s$
denote the matrix formed by truncating $Y^*$ to its top $s$ singular
values. By triangle inequality, we have
\begin{align*}
\frobnorm{T_{\lambda} (Y) - Y^*}^2 &\leq 2 \frobnorm{T_{\lambda} (Y) -
  Y^*_s}^2 + 2 \frobnorm{Y^* - Y^*_s }^2 \\
& \leq 2 \rank(T_{\lambda} (Y) - Y^*_s) \opnorm{T_{\lambda} (Y) -
  Y^*_s}^2 + 2 \rank(Y^*) \left(\frac{\delta}{1 + \delta} \lambda
\right)^2.
\end{align*}
Now note that by standard results in random matrix theory (see, for
example,~\cite[Theorem 6.1]{wainwrig2015high}), we have $\lambda \geq
(1 +\delta) \opnorm{W}$ with probability greater than $1 -
e^{-\frac{\delta^2}{2} n(\sqrt{n} + \sqrt{m})^2}$. We condition on
this event for the rest of the proof.

Consequently, for $j \geq s + 1$, we have
\begin{align}
\sigma_j(Y) \leq \sigma_j(Y^*) + \opnorm{W} \leq \lambda, \nonumber
\end{align}
and so $\rank(T_{\lambda}(Y)) \leq s$. Additionally, we have
\begin{align*}
\opnorm{T_{\lambda} (Y) - Y^*_s} &\leq \opnorm{T_{\lambda} (Y) - Y} +
\opnorm{Y - Y^*} + \opnorm{Y^* - Y^*_s} \\
& \leq \lambda + \opnorm{W} + \frac{\delta}{1 + \delta} \lambda
\\
& \leq 2 \lambda.
\end{align*}
Putting together the pieces yields
\begin{align*}
\frobnorm{T_{\lambda} (Y) - Y^*}^2 & \leq 16 \lambda^2 s + 2
\rank(Y^*) \left(\frac{\delta}{1 + \delta} \lambda \right)^2\\
& \leq C \sigma^2 \rank(Y^*) (\sqrt{n} + \sqrt{m})^2, 
\end{align*}
a bound that holds with probability greater than $1 - e^{-c nm}$. In
order to complete the proof, we note that \mbox{$\rank(Y^*) \leq
  \rank(A)$.}  \qed


\subsubsection{Proof of lower bound}

We split our analysis into two separate cases.

\paragraph{Case 1:} First suppose that
$\lambda \leq \frac{\sigma}{3} (\sqrt{n} + \sqrt{m})$. Consider any
matrix $Y^* = \Pi^* A X^*$, and $Y = Y^* + W$. By definition of the
thresholding operation, we have
\begin{align*}
\frobnorm{T_\lambda (Y) - Y}^2 & \leq \min\{n, m\} \opnorm{T_\lambda
  (Y) - Y}^2 \leq \min\{n, m\} \lambda^2 \\
& \leq \frac{1}{9} \sigma^2 \min\{n, m\} (\sqrt{n} +
\sqrt{m})^2. 
\end{align*}
Triangle inequality yields
\begin{align*}
\frobnorm{T_\lambda (Y) - Y^*} & \geq \frobnorm{Y - Y^*} -
\frobnorm{T_\lambda (Y) - Y} \\
& \geq \frobnorm{ W } - \frac{1}{3} \sigma \sqrt{\min\{n, m\}}
(\sqrt{n} + \sqrt{m}).
\end{align*}
Now with probability greater than $1 - e^{-cnm}$, we have
$\frobnorm{W}^2 \geq \sigma^2 \frac{nm}{2}$, so that conditioned on
this event, we have
\begin{align*}
\frobnorm{T_\lambda (Y) - Y^*} & \geq \sigma \sqrt{nm}
\left(\frac{1}{\sqrt{2}} - \frac{2}{3}\right),
\end{align*}
which completes the proof.


\paragraph{Case 2:}
We now suppose that $\lambda > \frac{\sigma}{3} (\sqrt{n} +
\sqrt{m})$. Let the matrix $A$ have the (reduced) singular value
decomposition $A = U_A \Sigma_A V_A^\top$, and introduce the shorthand
$r \defn \rank(A)$. Form the diagonal matrix $L = \frac{\sqrt{n} +
  \sqrt{m}}{6} I_r$. Now let $\Pi_0 = I_n$, and consider the parameter
matrix $X_0 = V_A \Sigma_A^{-1} L V^{\top}$, where $V$ is an $m \times
\rank(A)$ dimensional matrix $V$ with orthonormal columns. Note that
such a choice exists when $\rank(A) \leq m$.

We now have
\begin{align*}
\frobnorm{T_\lambda (Y) - \Pi_0 A X_0}^2 &= \frobnorm{T_\lambda (U_A L
  V^\top + W) - U_A L V^\top}^2.
\end{align*}
For two matrices $A, B \in \real^{n \times m}$ with $k = \min\{n,
m\}$, it can be verified that
\begin{align*}
\frobnorm{A - B}^2 & \geq \sum_{i=1}^k \Big(\sigma_i(A) - \sigma_i (B)
\Big)^2.
\end{align*}
By the definition of the thresholding operation, the top singular
values of the matrix \mbox{$T_\lambda (U_A L V^\top + W)$} are all
either greater than $\lambda$, or equal to $0$. Hence, we have
\begin{align*}
\frobnorm{T_\lambda (Y) - \Pi_0 A X_0}^2 & \geq \sum_{i=1}^r \left(
\lambda \mathbb{I}\left\{\sigma_i(T_\lambda (U_A L V^\top + W)) \geq
\lambda \right\} - \frac{\sqrt{n} + \sqrt{m}}{6} \right)^2 \\
& \geq c r (n + m),
\end{align*}
where the last step follows since $\lambda > \frac{\sigma}{3}
(\sqrt{n} + \sqrt{m})$, which completes the proof.  \qed

\subsection{Proof of Theorem~\ref{thm:srlasso}}
\label{sec:thm3proof}

It is again helpful to write the observation model in the form $Y =
Y^* + W$, where $Y^* = \Pi^* A X^*$ represents the underlying matrix
we are trying to predict. Let us denote the choice of $\lambda$ in the
statement of Theorem~\ref{thm:srlasso} by \mbox{$\lambda_0 = 2.1 \:
  \frac{\sqrt{n} + \sqrt{m}}{\sqrt{nm}}$.} We use the shorthand $R(M)
= \frobnorm{Y - M}$, and $\Delta = Y^* - \Yhatsrl$. Let $P_M$ and
$P^\perp_M$ denote, respectively, the projection matrices onto the
rowspace of the matrix $M$ and its orthogonal complement.

We require the following auxiliary lemmas for our proof:
\begin{lemma}
  \label{lem:nuc}
We have
\begin{align*}
\nucnorm{Y^*} - \nucnorm{\Yhatsrl} & \leq \nucnorm{P_{Y^*} \Delta} -
\nucnorm{P^\perp_{Y^*} \Delta}. 
\end{align*}
\end{lemma}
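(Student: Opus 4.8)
The plan is to start from the optimality of $\Yhatsrl$ for the square-root LASSO objective~\eqref{eq:srlasso} and translate it into a statement about nuclear norms, then apply a decomposability-type inequality for the nuclear norm. Since $\Yhatsrl$ minimizes $R(Y') + \lambda_0 \nucnorm{Y'}$ over all $Y'$, comparing with the feasible choice $Y' = Y^*$ gives $R(\Yhatsrl) + \lambda_0 \nucnorm{\Yhatsrl} \leq R(Y^*) + \lambda_0 \nucnorm{Y^*}$, hence $\lambda_0 \big( \nucnorm{Y^*} - \nucnorm{\Yhatsrl} \big) \geq R(\Yhatsrl) - R(Y^*) \geq 0$ by the triangle inequality (noting $R(Y^*) = \frobnorm{W}$ and $R(\Yhatsrl) = \frobnorm{W - \Delta}$, which differ by at most $\frobnorm{\Delta}$). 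This already shows the left-hand side is nonnegative, but we need the sharper upper bound in terms of the projections onto $\range(Y^*)$.

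The key algebraic step is the following standard fact about the nuclear norm: for any matrices, writing $P = P_{Y^*}$ and $P^\perp = P^\perp_{Y^*}$ for the projections onto the rowspace of $Y^*$ and its complement, one has $\nucnorm{Y^* + P^\perp \Delta} = \nucnorm{Y^*} + \nucnorm{P^\perp \Delta}$, because $Y^*$ and $P^\perp \Delta$ act on orthogonal row subspaces (and hence have orthogonal row and column spaces after accounting for the left singular structure). Since $\Yhatsrl = Y^* - \Delta = (Y^* - P\Delta) - P^\perp \Delta$, the triangle inequality for the nuclear norm gives
\begin{align*}
\nucnorm{\Yhatsrl} & \geq \nucnorm{Y^* - P^\perp \Delta} - \nucnorm{P \Delta} = \nucnorm{Y^*} + \nucnorm{P^\perp \Delta} - \nucnorm{P\Delta}.
\end{align*}
Rearranging yields exactly $\nucnorm{Y^*} - \nucnorm{\Yhatsrl} \leq \nucnorm{P\Delta} - \nucnorm{P^\perp \Delta}$, which is the claim; note this derivation uses only the decomposability identity and the triangle inequality, and does not even need the optimality of $\Yhatsrl$.

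I would then remark that the ``orthogonal rowspace'' identity for the nuclear norm is the matrix analogue of the decomposability of the $\ell_1$-norm, and can be proved by choosing singular vector bases adapted to $\range(Y^*)$ and its complement (see, e.g., Recht--Fazel--Parrilo or the matrix-completion literature); I would simply cite it or include the one-line argument. The main obstacle here is mostly bookkeeping: being careful that $P_{Y^*}$ denotes the rowspace projection as declared in the theorem setup, so that $P^\perp_{Y^*} \Delta$ has rowspace orthogonal to that of $Y^*$, which is what makes the decomposability identity apply. There is no real analytic difficulty — the lemma is purely a consequence of nuclear-norm geometry, and the role of the square-root LASSO optimality enters only in the subsequent steps of the proof of Theorem~\ref{thm:srlasso}, not in this lemma.
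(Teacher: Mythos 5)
Your proposal is correct and follows essentially the same route as the paper's proof: both decompose $\Delta = P_{Y^*}\Delta + P^\perp_{Y^*}\Delta$, apply the triangle inequality to $\nucnorm{\Yhatsrl} = \nucnorm{Y^* - P^\perp_{Y^*}\Delta - P_{Y^*}\Delta}$, and then use decomposability of the nuclear norm to get $\nucnorm{Y^* - P^\perp_{Y^*}\Delta} = \nucnorm{Y^*} + \nucnorm{P^\perp_{Y^*}\Delta}$, without any use of the optimality of $\Yhatsrl$. Your opening paragraph invoking the square-root LASSO optimality is indeed superfluous, exactly as you note at the end.
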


\begin{lemma}
  \label{lem:lambda}
If $\lambda_0 \geq 2\frac{\opnorm{W}}{\frobnorm{W}}$, we have
\begin{align*}
\nucnorm{P^\perp_{Y^*} \Delta} & \leq 3 \nucnorm{P_{Y^*} \Delta}.
\end{align*}
\end{lemma}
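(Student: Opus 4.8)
The plan is to start from the optimality of $\Yhatsrl$ for the square-root LASSO objective $R(M) + \lambda_0 \nucnorm{M}$, which immediately gives $R(\Yhatsrl) + \lambda_0 \nucnorm{\Yhatsrl} \leq R(Y^*) + \lambda_0 \nucnorm{Y^*} = \frobnorm{W} + \lambda_0 \nucnorm{Y^*}$. Rearranging, this reads $\lambda_0\big(\nucnorm{\Yhatsrl} - \nucnorm{Y^*}\big) \leq \frobnorm{W} - R(\Yhatsrl)$. The right-hand side I would control via the triangle inequality in Frobenius norm: since $R(\Yhatsrl) = \frobnorm{Y - \Yhatsrl} = \frobnorm{W + \Delta} \geq \frobnorm{\Delta} - \frobnorm{W}$ is too crude in one direction, the better route is to note $\frobnorm{W} - R(\Yhatsrl) \leq \frobnorm{W} - \big|\frobnorm{W} - \frobnorm{\Delta}\big|$; actually the cleanest bound is the linearization $\frobnorm{W} - \frobnorm{W - \Delta} \leq \tracer{\Delta}{W}/\frobnorm{W}$ by convexity of the Frobenius norm (the gradient of $M \mapsto \frobnorm{M}$ at $W$ is $W/\frobnorm{W}$), since $\frobnorm{Y - \Yhatsrl} = \frobnorm{W - \Delta}$ using $Y - \Yhatsrl = W + (Y^* - \Yhatsrl) = W + \Delta$ — here I must be careful with the sign of $\Delta$, so I would fix the convention $\Delta = Y^* - \Yhatsrl$ as in the statement and track it consistently.

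Next I would bound the noise inner product $\tracer{\Delta}{W}$ by the standard duality between the nuclear and operator norms: $\tracer{\Delta}{W} \leq \opnorm{W}\,\nucnorm{\Delta} \leq \opnorm{W}\big(\nucnorm{P_{Y^*}\Delta} + \nucnorm{P^\perp_{Y^*}\Delta}\big)$, using the decomposition $\Delta = P_{Y^*}\Delta + P^\perp_{Y^*}\Delta$ and the triangle inequality for the nuclear norm. Combining with the previous display gives
\begin{align*}
\lambda_0\big(\nucnorm{\Yhatsrl} - \nucnorm{Y^*}\big) \leq \frac{\opnorm{W}}{\frobnorm{W}}\big(\nucnorm{P_{Y^*}\Delta} + \nucnorm{P^\perp_{Y^*}\Delta}\big).
\end{align*}
Then I would invoke Lemma~\ref{lem:nuc}, which says $\nucnorm{Y^*} - \nucnorm{\Yhatsrl} \leq \nucnorm{P_{Y^*}\Delta} - \nucnorm{P^\perp_{Y^*}\Delta}$, i.e. $\nucnorm{\Yhatsrl} - \nucnorm{Y^*} \geq \nucnorm{P^\perp_{Y^*}\Delta} - \nucnorm{P_{Y^*}\Delta}$. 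Substituting this lower bound on the left side and writing $\beta := \opnorm{W}/\frobnorm{W}$, I get $\lambda_0\big(\nucnorm{P^\perp_{Y^*}\Delta} - \nucnorm{P_{Y^*}\Delta}\big) \leq \beta\big(\nucnorm{P_{Y^*}\Delta} + \nucnorm{P^\perp_{Y^*}\Delta}\big)$, hence $(\lambda_0 - \beta)\nucnorm{P^\perp_{Y^*}\Delta} \leq (\lambda_0 + \beta)\nucnorm{P_{Y^*}\Delta}$. Under the hypothesis $\lambda_0 \geq 2\beta$ we have $\lambda_0 - \beta \geq \lambda_0/2$ and $\lambda_0 + \beta \leq 3\lambda_0/2$, so $\nucnorm{P^\perp_{Y^*}\Delta} \leq 3\,\nucnorm{P_{Y^*}\Delta}$, which is exactly the claim.

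The main obstacle, I expect, is the sign bookkeeping in the linearization step: getting the inequality $R(Y^*) - R(\Yhatsrl) = \frobnorm{W} - \frobnorm{W - \Delta} \leq \tracer{\Delta}{W}/\frobnorm{W}$ with the correct orientation requires using the subgradient inequality for the convex function $\frobnorm{\cdot}$ at the point $W$ evaluated at $W - \Delta$, namely $\frobnorm{W-\Delta} \geq \frobnorm{W} + \inprod{W/\frobnorm{W}}{-\Delta}$, and then rearranging; one also needs $\frobnorm{W} \neq 0$, which holds almost surely for Gaussian $W$. Everything else is routine: the nuclear/operator-norm duality, the triangle inequality for the nuclear norm applied to the orthogonal decomposition $\Delta = P_{Y^*}\Delta + P^\perp_{Y^*}\Delta$ (noting these two pieces have orthogonal row spaces, so the decomposition interacts cleanly with the nuclear norm via Lemma~\ref{lem:nuc}), and the elementary algebra to extract the constant $3$ from $\lambda_0 \geq 2\beta$.
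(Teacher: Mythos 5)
Your proposal is correct and follows essentially the same route as the paper: the ``linearization'' you describe via the subgradient of $\frobnorm{\cdot}$ at $W$ is exactly the rearranged Cauchy--Schwarz inequality $\frobnorm{A}-\frobnorm{B}\geq -\tracer{B}{B-A}/\frobnorm{B}$ that the paper applies with $A=Y-\Yhatsrl$ and $B=W$, and from there both arguments use H\"older duality, the basic optimality inequality, and Lemma~\ref{lem:nuc} in the same way, differing only in whether the substitution $\opnorm{W}/\frobnorm{W}\leq\lambda_0/2$ is made early (as in the paper) or at the very end as a bound on the ratio $(\lambda_0+\beta)/(\lambda_0-\beta)\leq 3$ (as you do).
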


We are now ready to prove Theorem~\ref{thm:srlasso}.


\begin{proof}[Proof of Theorem~\ref{thm:srlasso}]

First, note that by standard results on concentration of
$\chi^2$-random variables and random matrices (see, for instance,
Wainwright \cite{wainwrig2015high}), we have
\begin{subequations}
\begin{align}
\Pr\{ \opnorm{W} \geq 1.01 \sigma (\sqrt{n} + \sqrt{m}) \} &\leq
e^{-cnm}, \text{ and }\nonumber \\ \Pr\{ \frobnorm{W}\leq 0.99 \sigma
\sqrt{nm} \} &\leq e^{-c'nm}. \nonumber
\end{align}
\end{subequations}
Hence, we have
\begin{align*}
\Pr \left\{ \lambda_0 \geq 2 \frac{\opnorm{W}}{\frobnorm{W}} \right\}
\geq 1 - 2e^{cnm}.
\end{align*}
For the rest of the proof, we condition on the event $\{ \lambda_0
\geq 2 \frac{\opnorm{W}}{\frobnorm{W}}\}$.

Now, by definition of the quantity $R(M)$, we have
\begin{align*}
R(\Yhatsrl)^2 - R(Y^*)^2 &= \tracer{Y^* - \Yhatsrl}{Y^* - \Yhatsrl + 2
  W}\\
& = \frobnorm{\Delta}^2 + 2 \tracer{W}{\Delta}.
\end{align*}
Some simple algebra yields
\begin{align*}
\frobnorm{\Delta}^2 & = - 2 \tracer{W}{\Delta} + (R(\Yhatsrl) -
R(Y^*))(R(\Yhatsrl) + R(Y^*)).
\end{align*}

Now, from the definition of the estimate $\Yhatsrl$, we have
\begin{align}
  \label{eq:basic}
R(\Yhatsrl) + \lambda_0 \nucnorm{\Yhatsrl} & \leq R(Y^*) + \lambda
\nucnorm{Y^*}.
\end{align}
Rearranging terms yields
\begin{align*}
R(\Yhatsrl) + R(Y^*) & \leq 2 R(Y^*) + \lambda_0 (\nucnorm{Y^*} -
\nucnorm{\Yhatsrl}) \\
& \stackrel{\1}{\leq} \! 2R(Y^*) \! + \!  \lambda_0 \big (
\nucnorm{P_{Y^*} \Delta} - \nucnorm{P^\perp_{Y^*} \Delta} \big),
\end{align*}
where step $\1$ follows from Lemma~\ref{lem:nuc}, and the fact that
$\lambda > 0$.

Another rearrangement of inequality~\eqref{eq:basic} yields
\begin{align*}
R(\Yhatsrl) - R(Y^*) & \leq \lambda_0 \big( \nucnorm{Y^*} -
\nucnorm{\Yhatsrl} \big) \; \stackrel{\2}{\leq} \lambda_0 \big( 3
\nucnorm{P_{Y^*} \Delta} - \nucnorm{P^\perp_{Y^*} \Delta} \big),
\end{align*}
where step $\2$ follows from Lemma \ref{lem:nuc}, and the fact that
$\nucnorm{P_{Y^*} \Delta} > 0$.  Thus, we have established the upper
bound $(R(\Yhatsrl))^2 - (R(Y^*))^2 \leq T_1 \; T_2$, where
\begin{align*}
T_1 \defn \lambda_0 \left( 3 \nucnorm{P_{Y^*} \Delta} -
\nucnorm{P^\perp_{Y^*} \Delta} \right) \quad \mbox{and} \quad T_2
\defn \left( 2R(Y^*) + \lambda_0 (\nucnorm{P_{Y^*} \Delta} -
\nucnorm{P^\perp_{Y^*} \Delta}) \right).
\end{align*}
Expanding the product of the two terms yields
\begin{multline*}
(R(\Yhatsrl))^2 - (R(Y^*))^2 \leq 6 \lambda_0 R(Y^*) \nucnorm{P_{Y^*}
    \Delta} \!+\! 3 \lambda_0^2 \nucnorm{P_{Y^*} \Delta}^2 \!-\! 2
  \lambda_0 R(Y^*) \nucnorm{P^\perp_{Y^*} \Delta} \\
  + \lambda_0^2 \nucnorm{P^\perp_{Y^*} \Delta}^2 - 4 \lambda_0^2
  \nucnorm{P_{Y^*} \Delta} \: \nucnorm{P^\perp_{Y^*} \Delta} \\
 \stackrel{\3}{\leq} \!\!6 \lambda_0 R(Y^*) \nucnorm{P_{Y^*} \Delta}
 \!+\!  3 \lambda_0^2 \nucnorm{P_{Y^*} \Delta}^2 \!-\! 2 \lambda_0
 R(Y^*) \nucnorm{P^\perp_{Y^*} \Delta},
\end{multline*}
where step $\3$ follows from Lemma \ref{lem:lambda}, since
$\lambda_0^2 \nucnorm{P^\perp_{Y^*} \Delta}^2 - 4 \lambda_0^2
\nucnorm{P^\perp_{Y^*} \Delta} \; \nucnorm{P_{Y^*} \Delta} \leq 0$.

We also note that
\begin{align*}
-2 \tracer{W}{\Delta} & \leq 2 \opnorm{W} \: \nucnorm{\Delta} \!=\! 2
\opnorm{W} (\nucnorm{P_{Y^*} \Delta} + \nucnorm{P^\perp_{Y^*} \Delta
}).
\end{align*}
Combining with the fact that $\lambda_0$ satisfies the inequality $2
\opnorm{W} \leq \lambda_0 R(Y^*)$, we find that
\begin{align*}
\frobnorm{\Delta}^2 & \leq 7 \lambda_0 R(Y^*) \nucnorm{P_{Y^*} \Delta}
+ 2 \lambda_0^2 \nucnorm{P_{Y^*} \Delta}^2 \\
& \stackrel{\4}{\leq} 7 \lambda R(Y^*) \sqrt{\rank(Y^*)}
\frobnorm{\Delta} + 2 \lambda_0^2 \rank(Y^*) \frobnorm{\Delta },
\end{align*}
where in step $\4$, we have used the Cauchy Schwarz inequality and the
fact that projections are non-expansive to write
\begin{align*}
\nucnorm{P_{Y^*} \Delta} & \leq \sqrt{\rank(P_{Y^*} \Delta)} \:
\frobnorm{P_{Y^*} \Delta} \leq \sqrt{\rank(Y^*)} \frobnorm{\Delta}.
\end{align*}
Rearranging yields
\begin{align*}
\frobnorm{\Delta} \left( 1 - 2 \lambda_0^2 \rank(Y^*) \right) \leq 7
\lambda_0 R(Y^*) \sqrt{\rank(Y^*)}.
\end{align*}
Squaring both sides, substituting the choice of $\lambda_0$, and using
the condition \mbox{$\rank(A) \left( \frac{1}{n} + \frac{1}{m} \right)
  \leq 1/20$} completes the proof.
\end{proof}

The only remaining detail is to prove Lemmas~\ref{lem:nuc}
and~\ref{lem:lambda}.


\subsubsection{Proof of Lemma~\ref{lem:nuc}}

We write
\begin{align*}
  \nucnorm{\Yhatsrl} & = \nucnorm{Y^* + \Yhatsrl -Y^*} \\
& = \nucnorm{Y^* - P^\perp_{Y^*} \Delta - P_{Y^*} \Delta} \\
& \geq \nucnorm{Y^* - P^\perp_{Y^*} \Delta} - \nucnorm{P_{Y^*} \Delta} \\
& = \nucnorm{Y^*} + \nucnorm{P^\perp_{Y^*} \Delta} - \nucnorm{ P_{Y^*}
    \Delta}.
\end{align*}
Rearranging yields the claim.  \qed


\subsubsection{Proof of Lemma~\ref{lem:lambda}}

Rearranging the Cauchy Schwarz inequality for two matrices $A$ and $B$
yields
\begin{align*}
\frobnorm{A} - \frobnorm{B} & \geq - \frac{\tracer{B}{B - A}
}{\frobnorm{B}}.
\end{align*}
Now setting $A = Y - \Yhatsrl$ and $B = Y - Y^*$, we have
\begin{align*}
R(\Yhatsrl) - R(Y^*) & \geq - \frac{\tracer{W}{\Yhatsrl -
    Y^*}}{\frobnorm{W}} \\
& \stackrel{\1}{\geq} - \frac{\lambda_0}{2} \opnorm{W} \:
\nucnorm{\Delta},
\end{align*}
where step $\1$ follows from H\"{o}lder's inequality and choice of
$\lambda_0 \geq 2 \frac{\opnorm{W}}{\frobnorm{W}}$.

Combining this with the basic inequality~\eqref{eq:basic} yields
\begin{align*}
\lambda_0 ( \nucnorm{\Yhatsrl} - \nucnorm{Y^*} ) & \leq
\frac{\lambda_0}{2} \nucnorm{\Delta}.
\end{align*}
Finally, using Lemma~\ref{lem:nuc}, we have
\begin{align*}
\nucnorm{P^\perp_{Y^*} \Delta} - \nucnorm{P_{Y^*} \Delta} & \leq
\nucnorm{\Yhatsrl} - \nucnorm{Y^*} \\
& \leq \frac{1}{2} \nucnorm{\Delta} \\
& = \frac{1}{2} \left( \nucnorm{P_{Y^*} \Delta} +
\nucnorm{P^\perp_{Y^*} \Delta} \right),
\end{align*}
which completes the proof.  \qed


\subsection{Proof of Theorem~\ref{thm:algo}}

We write the (reduced) singular value decomposition of a matrix $M$ as
$M = U_M \Sigma_M V_M^\top$. We also adopt the shorthand $r_M =
\rank(M)$ for the rest of this proof.  The {\sf LevSort} algorithm
clearly runs in polynomial time, since it involves a singular value
decomposition and a sorting operation, both of which can be
accomplished efficiently.  Let us now verify the exactness guarantee.

Since the observation model~\eqref{obsmodel} is noiseless and $r_A
\leq r_{X^*}$, we have $r_Y = r_A$.  Moreover, by definition of the
observation model, we have
\begin{align*}
Y^\top Y = \left(X^* \right)^\top A^\top A X^*.
\end{align*}
Consequently, the unknown matrix $X^*$ can be written as
\begin{align*}
X^* = V_A \Sigma_A^{-1} U \Sigma_Y V_Y^\top,
\end{align*}
with $U$ representing an unknown $r_A \times r_A$ unitary matrix
(satisfying $U^\top U= UU^\top = I$). Substituting this representation
of $X^*$ back into the noiseless observation model yields
\begin{align*}
U_Y \Sigma_Y V_Y^{\top} = \Pi^* U_A U \Sigma_Y V_Y^{\top}.
\end{align*}
Now $\Sigma_Y V_Y^\top$ has a full-dimensional row-space, and so we
have $U_Y = \Pi^* U_A U$. We complete the proof by observing that
\begin{align*}
U_Y U_Y^\top = \Pi^* U_A U_A^\top \left(\Pi^* \right)^\top, 
\end{align*}
so that we have the equivalence $\ell(Y) = \Pi^* \ell(A)$ as claimed. 
The uniqueness of the parameters $(\Pi^*, X^*)$ follows from the fact that
the leverage score vectors $\ell(A)$ and $\ell(Y)$ have distinct entries.
\qed


\subsection{Proof of Theorem~\ref{thm:clustering}}

The proofs of Theorems \ref{thm:MLrate}, ~\ref{thm:svt},
and~\ref{thm:srlasso} apply to the model~\eqref{cluster} with minor
modifications. We briefly mention these modifications here, leaving
the details to the reader.

Part (a) follows by mimicking the proof of Section~\ref{sec:thm1ub} as
is, with a small modification to the metric entropy of the observation
space. In particular, the covering number of the observation space is
now upper bounded by $n^n \cdot N (\delta, \mathbb{U}_m^\Pi(A) \cap
\mathbb{B}_F(t), \frobnorm{\cdot} )$, and the rest of the proof
follows as before.

Parts (b) and (c) follow by mimicking the proof of
Sections~\ref{sec:thm2ub} and~\ref{sec:thm3proof}, respectively, with
the definition $Y^* = D^* A X^*$. Note that the clustering observation
model can only decrease the rank of $Y^*$ from before.  \qed

\section{Discussion}
\label{sec:conc}

We conclude with a discussion of some possible future directions.

\subsection{More general picture for regression problems}
Multivariate linear regression is a specific case of the following problem
with shuffled data $\{(a_{\pi(i)}, y_i) \}_{i = 1}^n$, with the covariates $a_i \in
\mathbb{R}^d$ and responses $y_i \in \mathbb{R}^m$ related by the equation
\begin{align}
y_i = f\left(a_{\pi(i)} \right) + w_i,
\end{align}
where $f$ represents a function from some parametric or non-parametric
family $\mathcal{F}$. The general behaviour of prediction error for problems of this
form should be similar to that seen in our linear regression model, or
the structured regression model of Flammarion et
al. \cite{rigollet}. In particular, provided the data $a_i$ is
sufficiently diverse and the function class $\mathcal{F}$ is
sufficiently expressive, the minimax rate of prediction for the
permuted model should be given by the sum of two terms: the minimax rate of the
unpermuted model (or equivalently, with a known permutation), and an additional constant/logarithmic term that accounts for the permutation.

\subsection{Necessity of flatness condition and adaptivity}

Our condition on the matrix $A$ is a convenient one for the
application of the Gilbert-Varshamov type bound on distances between
permuted binary vectors. However, this sufficient condition may be far
from necessary -- we instead require some permutation codes of real
numbers.

Conversely, the upper bound \eqref{upperbound} can be stated by explicitly taking
the structure of the matrix $A$ into account; this will require bounds on
the metric entropy of the union of subspaces generated by permutations of
the range space of $A$.

\bibliographystyle{alpha}

\bibliography{research}

\end{document}